\documentclass{article}

\usepackage{enumitem}
\usepackage{microtype}
\usepackage{graphicx}
\usepackage{subcaption}
\usepackage{booktabs} %

\usepackage{hyperref}

\usepackage[accepted]{icml2026}

\usepackage{amsmath}
\usepackage{amssymb}
\usepackage{mathtools}
\usepackage{amsthm}

\usepackage[utf8]{inputenc} %
\usepackage[T1]{fontenc}    %
\usepackage{hyperref}       %
\usepackage{url}            %
\usepackage{booktabs}       %
\usepackage{amsfonts}       %
\usepackage{amsmath}
\usepackage{amsthm}
\usepackage{nicefrac}       %
\usepackage{microtype}      %
\usepackage{xcolor}         %
\usepackage{graphicx}
\usepackage{xurl} %

\usepackage{pifont}
\usepackage{multirow}
\usepackage{colortbl}
\newcommand{\bad}{\cellcolor{red!70}\textcolor{white}{\ding{56}}} %

\usepackage{listings}
\usepackage{xcolor}

\lstdefinelanguage{Macaulay2}{
  morekeywords={
    ring, ideal, gens, matrix, map, ker, image, source, target,
    dim, degree, codim, betti, resolution, mingens
  },
  sensitive=true,
  morecomment=[l]--,
  morestring=[b]"
}

\usepackage{tikz}
\usetikzlibrary{positioning, arrows.meta, fit}
\usetikzlibrary{calc}

\newcommand{\spine}{\mathsf{Spine}}
\newcommand{\linearize}{\mathsf{Linear}}

\def\GG{{\mathcal G}}

\usepackage[capitalize,noabbrev]{cleveref}

\theoremstyle{plain}
\newtheorem{theorem}{Theorem}[section]
\newtheorem{proposition}[theorem]{Proposition}

\newtheorem{conj}[theorem]{Conjecture}
\theoremstyle{definition}
\newtheorem{definition}[theorem]{Definition}

\theoremstyle{remark}
\newtheorem{remark}[theorem]{Remark}

\usepackage[textsize=tiny]{todonotes}

\icmltitlerunning{HRL for Sparse-Reward Search in Commutative Algebra}

\begin{document}

\twocolumn[
\icmltitle{Hierarchical Reinforcement Learning for Sparse-Reward Search in Commutative Algebra}

  \icmlsetsymbol{equal}{*}

  \begin{icmlauthorlist}
    \icmlauthor{Giorgi Butbaia}{caltech}
    \icmlauthor{Paul Orland}{caltech}
    \icmlauthor{Coco Huang}{temple}
    \icmlauthor{Davide Passaro}{caltech}
    \icmlauthor{Lucas Fagan}{caltech}
    \icmlauthor{Michele Tarquini}{caltech}
    \icmlauthor{Hailong Dao}{kansas}
    \icmlauthor{David Eisenbud}{berkeley}
    \icmlauthor{Ali Shehper}{caltech}
    \icmlauthor{Sergei Gukov}{caltech}
  \end{icmlauthorlist}

  \icmlaffiliation{caltech}{Department of Mathematics, California Institute of Technology, Pasadena, CA}
  \icmlaffiliation{temple}{Department of Mathematics, Temple University, Philadelphia, PA}
  \icmlaffiliation{kansas}{Department of Mathematics, University of Kansas, Lawrence, KS}
  \icmlaffiliation{berkeley}{Department of Mathematics, University of California, Berkeley, Berkeley, CA}

  \icmlcorrespondingauthor{Giorgi Butbaia}{gbutbaia@caltech.edu}

  \icmlkeywords{Hierarchical Reinforcement Learning, Sparse Rewards, Constrained Policies, Commutative Algebra, AI for Math, Mathematical Reasoning}

  \vskip 0.3in
]

\printAffiliationsAndNotice{}  %

\begin{abstract}

Applying machine learning techniques to solving long-standing mathematical conjectures can be particularly challenging due to their extreme reward sparsity. As an illustrative example, we consider Kalai's algebraic Hirsch conjecture and recast the construction of its counterexamples as a sparse-reward reinforcement learning problem on graphs. We propose a constrained options-based HRL framework with an equivariant graph neural network policy, which allows us to learn useful temporal abstractions for this task. We evaluate our approach over a wide range of degrees and demonstrate that it consistently outperforms classical RL algorithms as well as greedy search. By exploiting the hierarchical structure of the problem, we effectively provide a first-of-its-kind application of HRL to a problem in commutative algebra.

\end{abstract}

\section{Introduction}

Reinforcement learning has emerged as a powerful tool for tackling combinatorial search problems in mathematics, from discovering matrix multiplication algorithms~\cite{fawzi2022discovering} to generating counterexamples in geometry~\cite{swirszcz2025advancing}. However, many fundamental problems in pure mathematics present an extreme challenge: the objects being sought are so rare that random exploration almost never encounters them, leading to reward signals that are essentially zero throughout training. This paper addresses one such problem in commutative algebra --- the search for rare monomial ideals that violate an algebraic analogue of the Hirsch diameter bound --- and demonstrates that hierarchical reinforcement learning (HRL), when combined with domain-specific constraints, can succeed where standard RL methods fail entirely.

The Hirsch conjecture, proposed in 1957, concerns the diameter of polytope graphs and has deep connections to the complexity of the Simplex algorithm for linear programming~\cite{Dantzig1963}. While the original conjecture was disproven after 50 years~\cite{Santos2012-pl}, an algebraic generalization due to Kalai remains open. This algebraic version asks whether the diameter of graphs associated with \emph{linearly-presented monomial ideals}\footnote{In this manuscript we will use linearly-presented monomial ideals and linear monomial ideals interchangeably.} can exceed the degree of their generators --- objects we call \emph{non-Hirsch ideals} --- and by how much. Finding such ideals is extremely difficult: as illustrated in Figure~\ref{fig:distribution}, the probability that a random ideal satisfies both the linearity and large-diameter conditions decays rapidly, making this a canonical example of a sparse-reward environment.

\begin{figure}[t!]
    \centering
    \includegraphics[width=0.9\columnwidth]{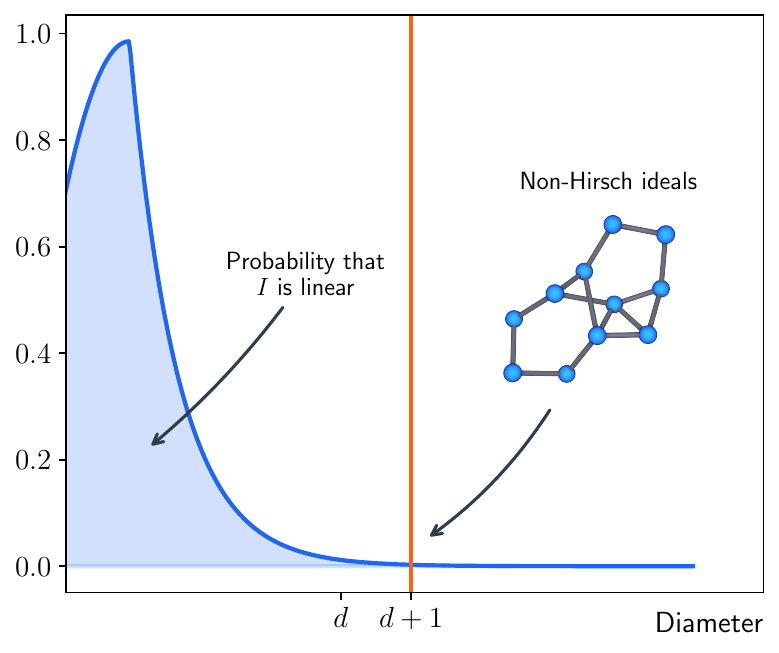}
    \caption{Probability that an ideal $I$ is linear as a function of the diameter $\mathrm{diam}(I)$. The probability rapidly decays as the diameter approaches $d+1$, which is where non-Hirsch ideals live.}\label{fig:distribution}
    \vspace{-1em}
\end{figure}
Our choice of problem is motivated in part by its mathematical significance: commutative algebraists have been very interested in linearity properties of monomial ideals (linear presentation/resolution/regularity) for a long time, and there is a large literature; see e.g. \cite{froberg1990,eagon-reiner1998,jahan-zheng2010,dao2022linearity} for a tiny sample.

From the machine learning perspective, our problem shares key characteristics with notoriously difficult RL benchmarks like \emph{Montezuma's Revenge}~\cite{ecoffet2021first}: rewards are extremely sparse, the state space grows combinatorially, and success requires discovering a precise sequence of actions. Standard algorithms such as PPO and SAC fail to find any solutions beyond the smallest problem instances (Table~\ref{tab:benchmarks}). This motivates our central question: \emph{can we design an HRL framework that exploits the mathematical structure of the problem to enable effective learning?}

Our key insight comes from analyzing the trajectories of trained agents: successful paths consistently pass through a particular class of intermediate states that we call \emph{spines} --- simple graphs satisfying the diameter constraint. This observation suggests a natural temporal abstraction: first construct a spine, then ``linearize'' it by adding generators to satisfy the algebraic condition. We formalize this as a \emph{Chained Constrained Options} framework, where two options are executed in sequence, each with intra-option policy constraints that prune invalid actions while preserving the ability to reach all solutions.

This framework addresses several well-known challenges in HRL. The instability that typically arises when learning multiple policy levels simultaneously~\cite{levy2019learning} is mitigated by our constraint structure, which provides stable sub-goal definitions. Unlike methods that struggle to learn meaningful sub-goals end-to-end~\cite{vezhnevets2017feudal,nachum2018data}, our mathematically-motivated decomposition yields interpretable intermediate states. The constraints themselves act as a form of curriculum, guiding exploration toward the sparse region of state space where solutions exist.

\paragraph{Contributions.} We make the following contributions:
\begin{itemize}[leftmargin=*,itemsep=0pt,topsep=0pt]
    \item We formulate sparse-reward counterexample search for non-Hirsch ideals as a graph-based RL environment, and provide strong baselines from both RL and classical search.
    \item We propose a \emph{Chained Constrained Options} HRL framework that leverages empirically identified bottlenecks (spines) and enforces algebraic constraints within options to mitigate HRL instability and subgoal-specification issues.
    \item We design an equivariant graph neural policy with \emph{syzygy-aware message passing} and obstruction features, and show that the resulting agent consistently outperforms standard RL and greedy search across a range of degrees, representing the first successful application of HRL to a problem in commutative algebra.
\end{itemize}

\paragraph{Roadmap.}
Section~\ref{sec:prelim} provides minimal mathematical background (more details can be found in Appendix~\ref{a:linear}).
Section~\ref{sec:setup} defines the search and RL formulations and establishes classical search baselines.
In section~\ref{sec:sconv} we introduce \emph{syzygy-aware message passing} and describe our graph neural network architecture. Section~\ref{sec:bottleneck} analyzes why standard RL approaches fail and identifies bottleneck states. Section~\ref{sec:spinesAndTempAbstractions} presents our constrained options HRL framework and experimental results.
We conclude in Section~\ref{sec:conclusion} with limitations and directions for extending HRL-style agentic search to other problems.

The repository containing additional materials is available at \url{https://github.com/Math-AI-Caltech/alghirsch-hrl}.

\subsection*{Conflict of Interest Disclosure}
The authors declare no financial conflicts of interest.

\section{Preliminaries: Algebraic Hirsch Conjecture}\label{sec:prelim}
The combinatorial version of the Hirsch conjecture is well known to be tractable with computational tools. In fact, one of the first counterexamples was produced via a computer search, generating a polytope with 36,442 vertices \cite{Santos2012-pl}. Recent approaches, such as \cite{swirszcz2025advancing}, have successfully utilized machine learning tools to construct a large number of counterexamples.

The algebraic version of the conjecture can be similarly formulated as an extremal problem. To construct counterexamples, one seeks to construct homogeneous square-free monomial ideals $I$, called \textit{non-Hirsch} ideals, that are \textit{linearly-presented}, and satisfy a large diameter constraint:
\begin{gather}\label{eq:algebraicHirsch}
    \mathrm{diam}(I) > d\,,
\end{gather}
where $d$ is the degree of the generators of $I$ and $\mathrm{diam}(I)$ is the diameter of the graph $G_I$ associated to the ideal $I$.

\begin{figure*}[t!]
	\centering
    \vspace{-2em}
	\resizebox{\textwidth}{!}{
	\begin{tikzpicture}[every node/.style={minimum width=3cm, minimum height=2cm, align=center}, arrow/.style={-{Latex[length=3mm]}, thick}]

		\node (F1) at (2,5) {\includegraphics[width=1cm, trim={0cm 8.0cm 0cm 9.5cm}]{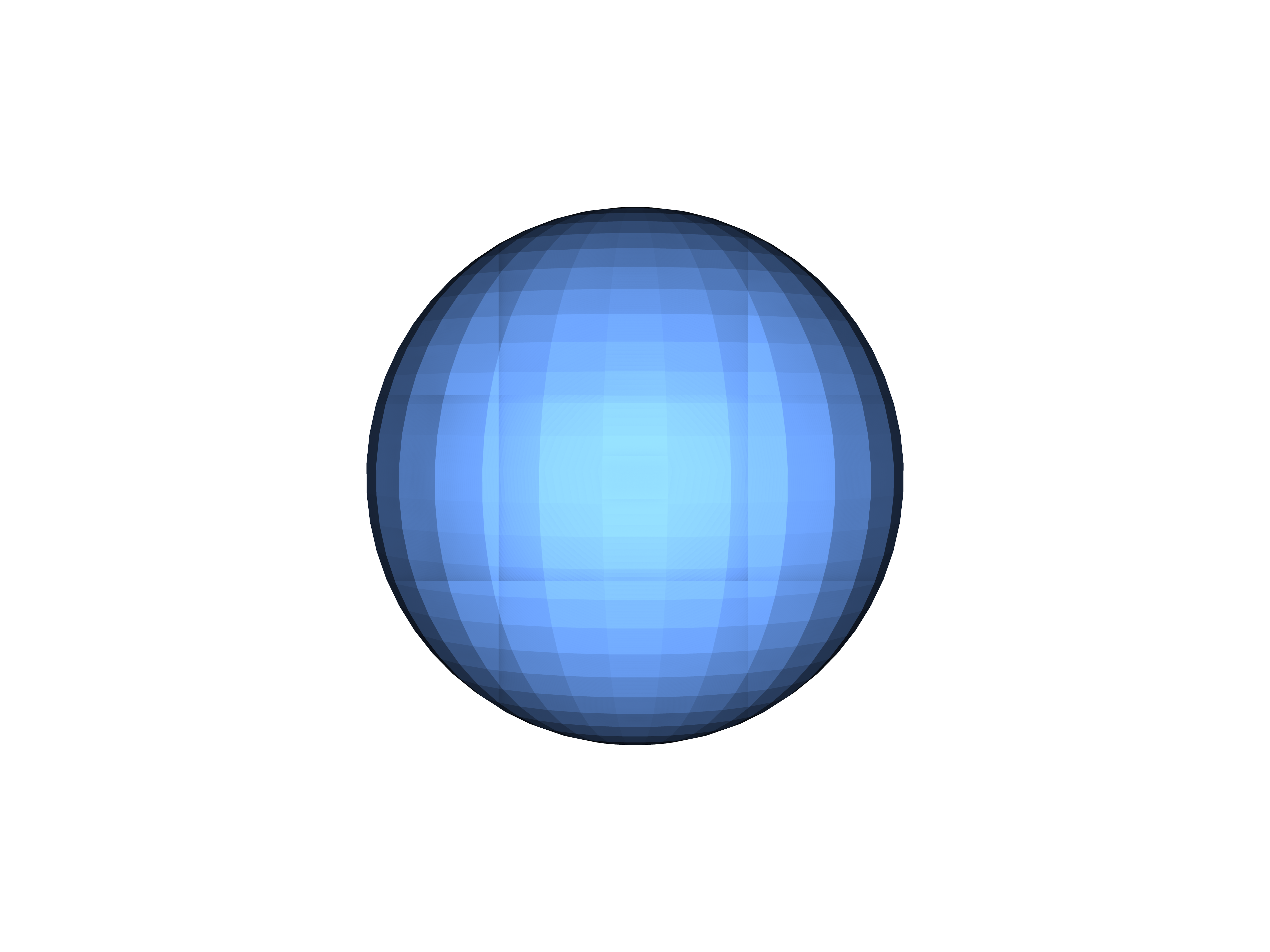}};
        \node[below=-1.5mm, font=\small\sffamily] at (2,5) {Initial node};
		\node (F2) at (7,5) {\includegraphics[width=2.5cm, trim={40cm 20cm 40cm 20cm}]{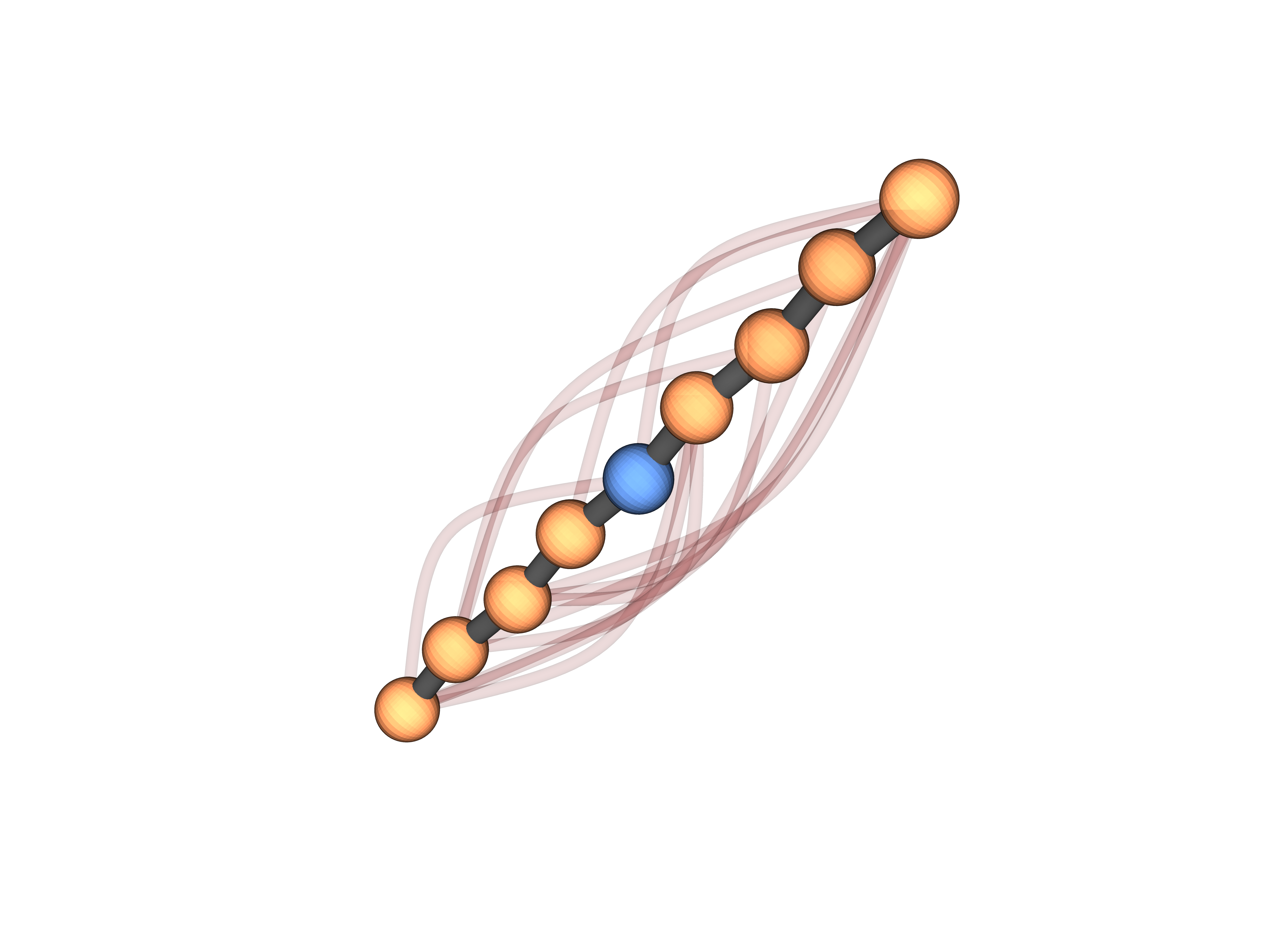}};
		\node (F3) at (12,5) {\includegraphics[width=2cm, trim={40cm 15cm 40cm 15cm}]{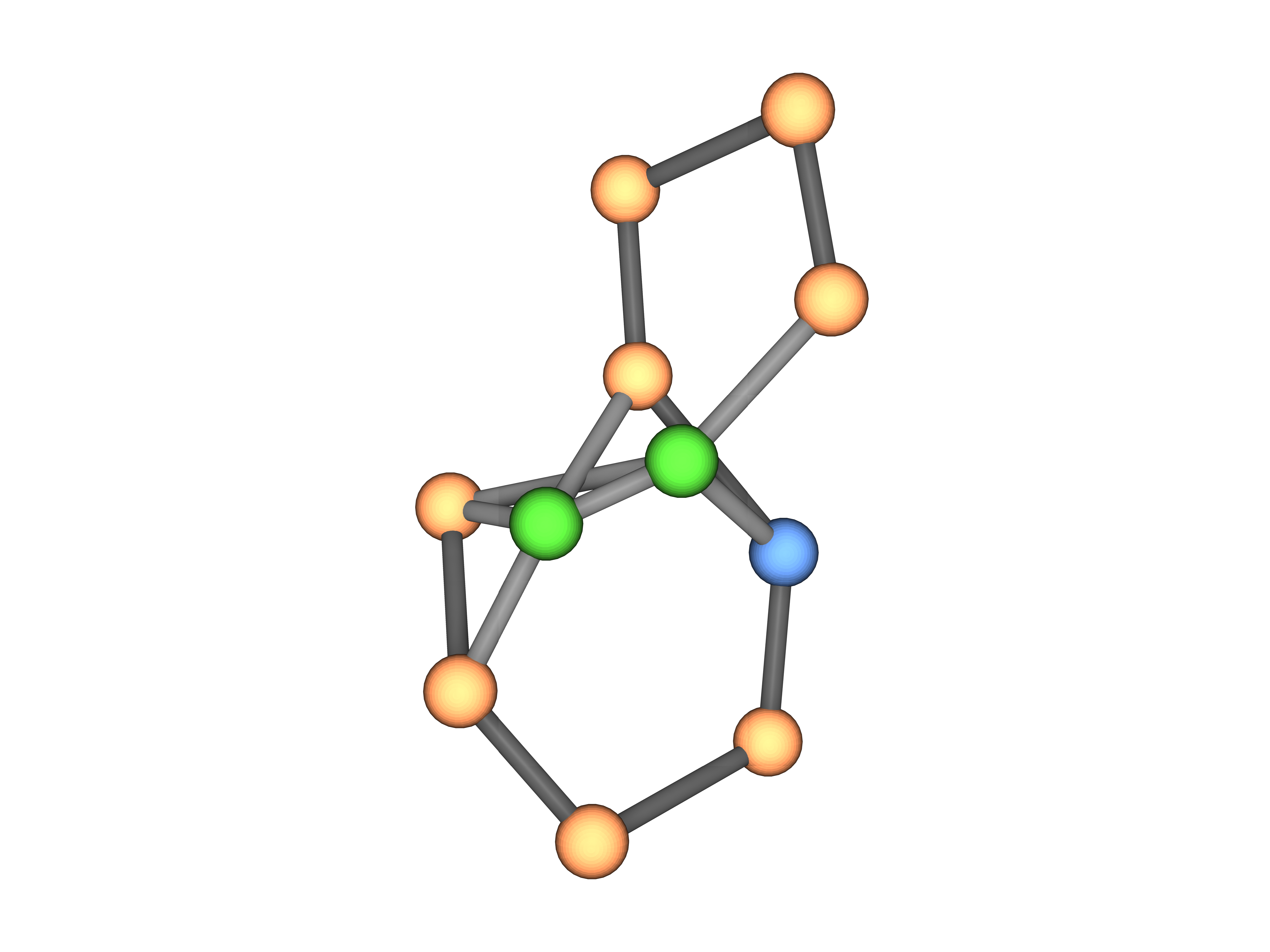}};

        \foreach \a in {-60, -50, ..., 60} {
            \ifnum\a>5
                \ifnum\a=40
                    \draw[-{Latex[length=1.5mm]}] (F1.east) -- ++(\a:10mm);
                \else
                    \draw[-{Latex[length=1mm]}, red] (F1.east) -- ++(\a:8mm);
                \fi
            \fi
            \ifnum\a<-5
                \ifnum\a=-20
                    \draw[-{Latex[length=1.5mm]}] (F1.east) -- ++(\a:10mm);
                \else
                    \draw[-{Latex[length=1mm]}, red] (F1.east) -- ++(\a:8mm);
                \fi
            \fi
        }
		\draw[arrow] (F1.east) -- (F2.west);
        \foreach \a in {-60, -50, ..., 60} {
            \ifnum\a>5
                \ifnum\a=20
                    \draw[-{Latex[length=1.5mm]}] (F2.east) -- ++(\a:10mm);
                \else
                    \draw[-{Latex[length=1mm]}, red] (F2.east) -- ++(\a:8mm);
                \fi
            \fi
            \ifnum\a<-5
                \draw[-{Latex[length=1mm]}, red] (F2.east) -- ++(\a:8mm);
            \fi
        }
        \draw[arrow] (F2.east) -- (F3.west);

        \node[above=0mm, font=\small\sffamily] at ($(F1.east)!0.5!(F2.west)$) {Construct spine};
        \node[below=0mm, font=\large] at ($(F1.east)!0.5!(F2.west)$) {$\omega_{\spine}$};

        \node[above=0mm, font=\small\sffamily] at ($(F2.east)!0.5!(F3.west)$) {Linearize};
        \node[below=0mm, font=\large] at ($(F2.east)!0.5!(F3.west)$) {$\omega_{\linearize}$};

        \node[font=\scriptsize] at (2.4, 2.4) {$\pi_{\spine}(a_1|s)$};
        \node[font=\scriptsize] at (3.45, 3.0) {$\pi_{\spine}(a_2|s)$};
        \node[font=\scriptsize] at (5.4, 2.8) {$\pi_{\spine}(a_3|s)$};
		\node (S1) at (3.5,1.5) {\includegraphics[width=3.7cm, trim={40cm, 20cm, 40cm, 20cm}]{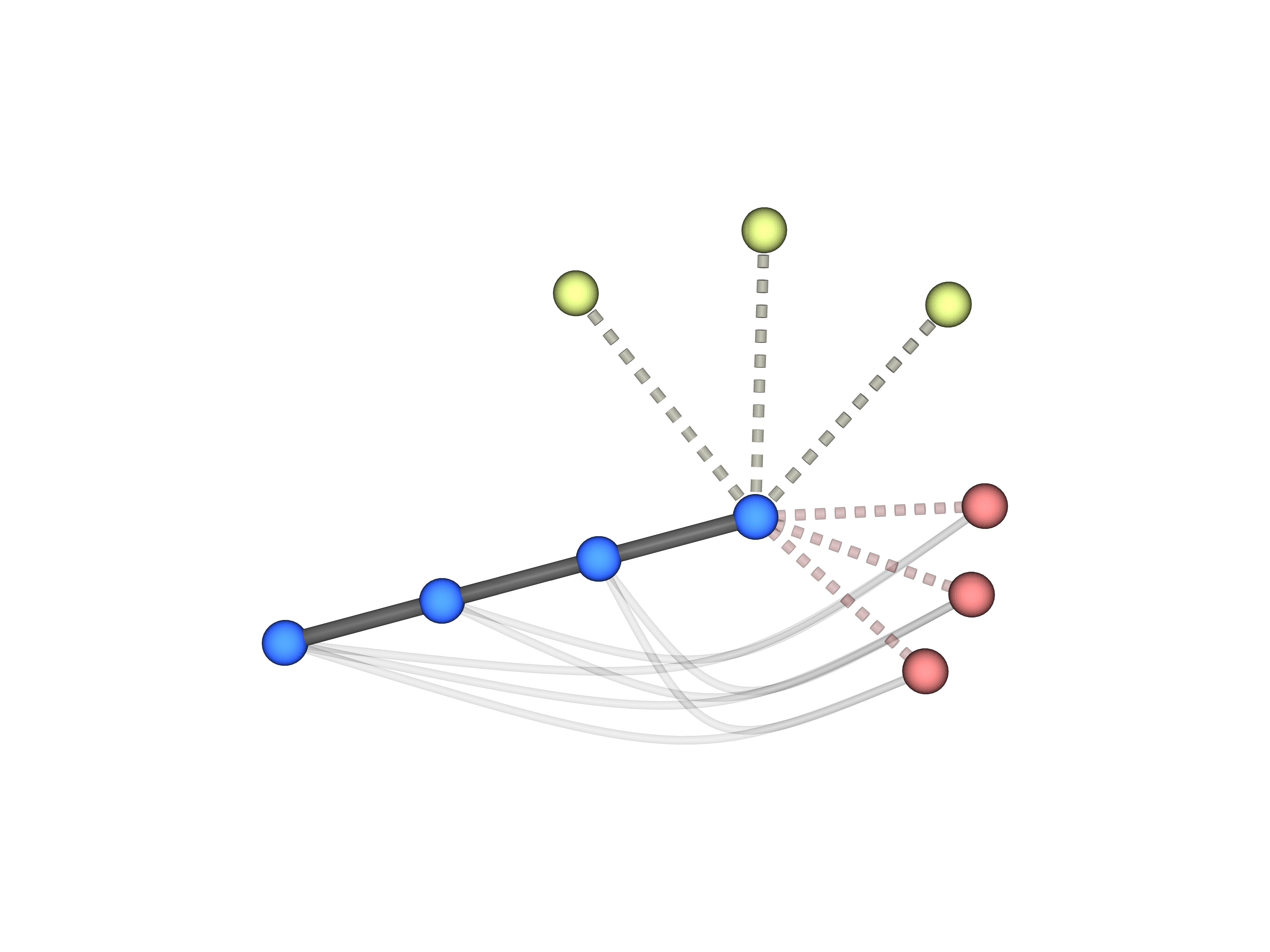}};

        \node[font=\scriptsize] at (8.5, 2.5) {$\pi_{\linearize}(a_1|s)$};
        \node[font=\scriptsize] at (10.6, 3.2) {$\pi_{\linearize}(a_2|s)$};
		\node (S2) at (10.2,1.5) {\includegraphics[width=3.7cm, trim={40cm, 20cm, 40cm, 20cm}]{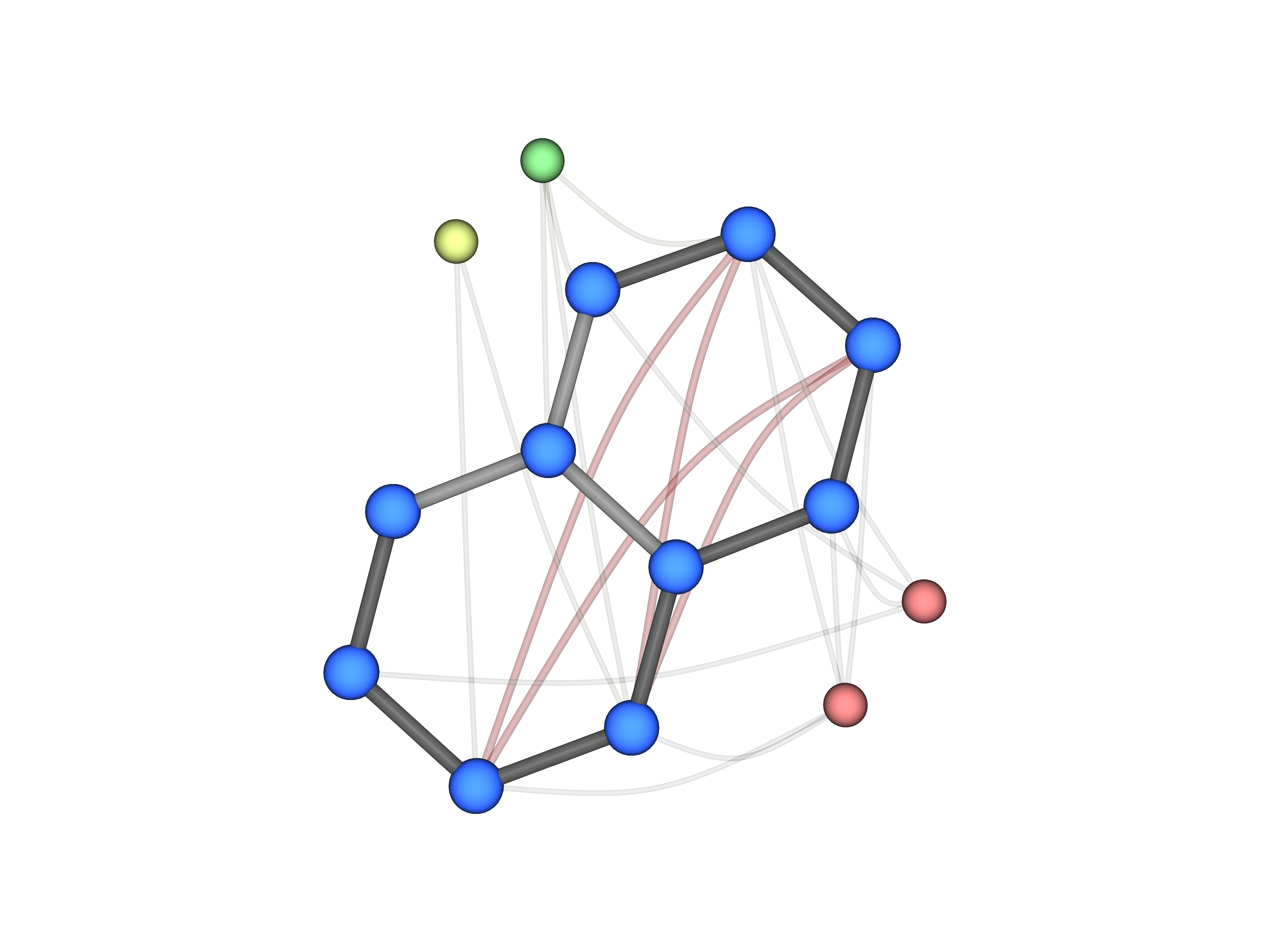}};

		\draw[line width = 0.6pt] (7,-0.5) -- (7,3);

		\draw[line width = 0.6pt] (0.5,3.5) -- (13.5,3.5);

		\node[above, minimum width=0pt, minimum height=0pt, align=center] at ([yshift=-0.1cm]F1.north west) {a)};
		\node[above, minimum width=0pt, minimum height=0pt, align=center] at ([yshift=-1.1cm]F1.south west) {b)};
		\node[above, minimum width=0pt, minimum height=0pt, align=center] at ([yshift=-0.5cm, xshift=-0.9cm]S2.north west) {c)};

		\node[draw, rounded corners=0.05cm, fit={(F1.north west) (F3.south east) (S1.south west) (S2.south east)}, inner sep=0.5cm, line width=0.6pt] (outer) {};
	\end{tikzpicture}}
	\caption{The two-step process with constrained options $\omega_{\spine}$ and $\omega_{\linearize}$ is shown on a). The red nodes in b) and c) denote invalid actions as they violate the intra-option constraints, while yellow and green nodes denote valid and terminal actions, respectively. The intra-option policies for $\omega_{\spine}$ and $\omega_{\linearize}$ are $\pi_{\spine}$ and $\pi_{\linearize}$, respectively. The red edges in a) and c) denote irreducible edges. For brevity, we shall use $\pi_S$ (and $\omega_S$) in place of $\pi_\spine$ and $\pi_L$ in place of $\pi_\linearize$.}\label{fig:options}
    \vspace{-1.2em}
\end{figure*}
Homogeneous square-free monomial ideals $I$ of degree $d$ in $n$ variables can be conveniently characterized by their generators, sets of words of fixed length $d$, consisting of non-repeating (square-free) letters from an alphabet of size $n$. For example, given an alphabet of $n=3$ letters $x_1$, $x_2$, $x_3$, a homogeneous square-free monomial ideal $I$ of degree $d=2$ can be constructed by choosing generators of the form $x_ix_j$, for $i\neq j$. A graph $G_I$ can be associated to an ideal $I$ by assigning a vertex to each generator of the ideal $I$, with edges connecting generators that differ by at most one letter. The \textit{linearity} condition is defined using commutative algebra; one can think of linearity as an algebraic complexity measure of the ideal (see Appendix~\ref{a:linear} for details).

Homogeneous square-free monomial ideals satisfying the large-diameter condition and the linearity condition individually \eqref{eq:algebraicHirsch} are abundant, but satisfying both conditions at the same time is highly non-trivial;
ideals that are linear and have a diameter exceeding the degree are exceedingly rare (see Fig.~\ref{fig:distribution}). 
In the RL setting, this scarcity translates to a sparse reward environment, making learning non-Hirsch ideals very challenging. 

We conclude this part with a summary of the key mathematical notations used throughout the paper:
\[
\begin{array}{ccl}
I && \text{(square-free) monomial ideal} \\ 
G_I && \text{graph associated to } I \\
n && \text{number of variables} \\ 
d && \text{degree} \\ 
\mathrm{diam}(I) && \text{diameter (of the graph } G_I \text{)}
\end{array}
\]

\section{Search Algorithms for Constructing non-Hirsch Ideals}\label{sec:setup} %

\begin{table*}[t!]
	\centering
	\caption{Performance comparisons of constructing non-Hirsch ideals by different algorithms. The effective success rate is defined as a count of successful environment terminations over the number of environment interactions.}\label{tab:benchmarks}
	\begin{tabular}{cccccc}
		\toprule[.5mm]
		\multirow{2}{*}{Algorithm} & \multirow{2}{*}{Type} & \multicolumn{4}{c}{Effective Success Rate ($\uparrow$)}\\
		\cmidrule(lr){3-6}
		& & $d=4$ & $d=5$ & $d=6$ & $d=7$\\
		\hline
		PPO \cite{schulman2017proximal} & --- & $1.94\times 10^{-6}$ & \bad & \bad & \bad \\
		SAC \cite{haarnoja2018soft} & --- &  $9.56\times 10^{-6}$ & \bad & \bad & \bad \\
		\midrule
		SAC+HuRL \cite{cheng2021heuristic} & Spine priors & $11.16\times 10^{-6}$ & \bad & \bad & \bad \\
        SAC+HER \cite{andrychowicz2017hindsight} & Spine priors & $7.78\times 10^{-5}$ & \bad & \bad & \bad \\
		SAC+Priors & Spine priors & --- & $0.15\times 10^{-6}$ & \bad & \bad \\
        SAC+Buffer \cite{swirszcz2025advancing} & Spine priors & --- & $2.27 \times 10^{-6}$ & \bad & \bad \\
		\midrule
		Best-first search & Brute-force & $2.03\times 10^{-2}$ & $1.55\times 10^{-3}$ & $6.59\times 10^{-4}$ & $3.9\times 10^{-4}$ \\
		\bf Constrained Options & \bf HRL & $\bf 1.08\times 10^{-1}$ & $\bf 8.74\times 10^{-2}$ & $\bf 6.78\times 10^{-2}$ & $\bf 3.16\times 10^{-2}$ \\
		\bottomrule[.5mm]
	\end{tabular}
\end{table*}

Our primary goal is to find an optimal search algorithm for linearly-presented ideals satisfying the large-diameter condition \eqref{eq:algebraicHirsch}.
To that end, we present two different classes of search strategies:
\begin{itemize}
	\item {\bf Classical Search}, including greedy-search algorithms such as Best-first search, A*, etc. These strategies are used to establish baselines.
	\item {\bf Reinforcement Learning}, which learn useful patterns for constructing non-Hirsch ideals. These strategies are the main focus of our work.
\end{itemize}
We define the state space $\mathcal{S}$ as the set of all monomial ideals of a given degree $d$. We use this state space for both strategy classes. 
We take the action space $\mathcal{A}$ to be the set of moves which toggle the inclusion of a generator in a state ideal.
Since the number of variables is fixed and the monomial ideals are square-free, the state and action spaces have sizes
\begin{gather}\label{eq:stateActionScaling}
    |\mathcal{S}| = 2^{N(n,d)},\quad |\mathcal{A}| = N(n,d) := \begin{pmatrix}n\\d\end{pmatrix}\,,
\end{gather}
where $n$ is the number of variables and $d$ is the degree of the monomials.

The combinatorial growth in the degree $d$ of the monomials and in the number $n$ of variables makes brute-force approaches intractable.
To amend this growth the following design choices are employed:

\textbf{1. Feature representation}: Let $I_{\mathsf{max}}(d, n)$ be the ideal generated by all possible square-free monomials of degree $d$ in $n$ variables. This ideal can be represented by a graph $G_{\mathsf{max}}(d,n)$ with nodes that are decorated with textual features corresponding to the word representations of monomials. The graph $G_{I}$ corresponding to an ideal $I$ is a subgraph of $G_{\mathsf{max}}(d,n)$. 

Motivated by this construction, we represent each ideal with a mask which picks out the nodes from $G_{\mathsf{max}}(d,n)$ corresponding to the generators of the ideal $I$.
This representation allows to process the ideals using message-passing algorithms, similar to those of graph attention. 
We demonstrate that this provides sufficient expressivity to capture relevant features for solving environments, provided that the graph associated to an ideal contains sufficient information.

\textbf{2. Environment Shaping}: The graph features associated with an ideal are insufficient to reliably determine whether the ideal is linearly-presented and to predict whether a long-horizon action will produce a linearly-presented ideal.

In Appendix~\ref{a:linear}, we provide an extension of these graphs by augmenting them with \textit{irreducible edges}.
The irreducible edges quantitatively measure how far a given ideal is from being linearly-presented, and also provide topological information regarding the location of these obstructions.
The topological information can then be picked up by the message-passing protocol, providing crucial information to the RL agents.
This information allows us to shape the observations and, through a heuristic $h(I)$, the reward function. 
We take the heuristic to be\begin{gather}\label{eq:heuristic}
        h(I) = \left|\left\{e \in \bigcup_{l>1}E_{l}(I)~\bigg\vert~ e~\text{is irreducible}\right\}\right|\,,
    \end{gather}
	where $E_{l}$ is the set of edges connecting nodes corresponding to the generators in $I$ which differ by $l$ variables.\footnote{See Appendix \ref{a:linear} for a more detailed definition of $E_l$.} For an efficient algorithm for checking if a given edge is reducible, see Alg.~\ref{alg:reduction}.

\textbf{3. Constraints}: Equation~\eqref{eq:stateActionScaling} shows that the domain on which RL agents operate scales combinatorially. Naive approaches, which we will discuss in Sec.~\ref{sec:bottleneck}, suffer from poor performance as the agents rarely get any positive reward, even with the aid of the heuristic from Equation~\eqref{eq:heuristic}. 

In order to identify useful constraints, we first observe the behavior of the agents and identify the \textit{bottleneck states} \cite{bottleneckHRL} that lie on the high-probability paths toward non-Hirsch ideals, which exhibit a particularly simple structure. We then introduce a set of \textit{chained constraints} to sequentially prune the state-action spaces by introducing a subgoal leading to the \textit{spines} bottleneck states, without preventing the agent from reaching any valid non-Hirsch ideal. This suggests recasting the problem in the framework of HRL, where the resulting temporal abstractions (Fig.~\ref{fig:options}) define a useful option set for constructing such ideals. %

\section{Features and Syzygy-aware Message Passing}\label{sec:sconv} %

Any ideal $I$ can be represented by a subgraph $G_I\subseteq G_{\mathsf{max}}(d,n)$.
However using the subgraph alone as the state representation causes the model to fail at capturing the long-term effects of introducing new nodes.
Introducing new nodes is critical for removing irreducible edges, as irreducibility depends on the existence of paths between the edges' endpoints.
Therefore the model requires explicit knowledge of the topology of the full graph $G_{\mathsf{max}}(d,n)$, with all possible generators, in order to capture both procedural and predictive knowledge.
By identifying each graph $G_I$ with a subgraph of $G_{\mathsf{max}}(d,n)$, we may represent each state by decorating the nodes of $G_{\mathsf{max}}(d,n)$ with an \emph{inclusion} bit, which indicates if the node is part of the subgraph.
We further augment the resulting decorated graph with a set of irreducible edges corresponding to the ideal $I$, ensuring that the network has direct access to the obstructions preventing the ideal from being linearly-presented.

Each action corresponds to flipping the inclusion bit of the corresponding generator. 
We therefore parametrize both the policy and the $Q$-function with a GNN which has a single-dimensional output for each node.
For the policy, these output features are converted to softmax probabilities, while for the $Q$-function, they provide a direct estimate of expected return of flipping each generator in a given state. This approach ensures that both functions remain equivariant under permutation of the nodes of the graph.

\begin{figure}[h!]
    \centering
\includegraphics[width=1.0\linewidth, trim={0cm, 2.7cm, 0cm, 2.2cm}, clip]{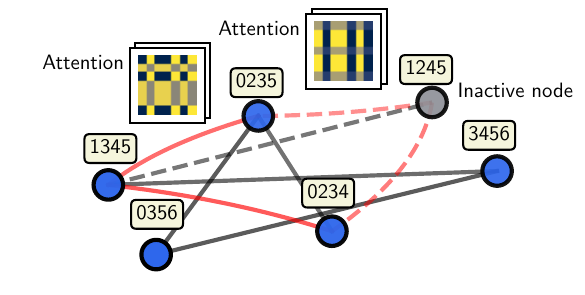}\vspace{-0.5em} %
    \caption{Syzygy-aware message-passing architecture used for processing node features. Blue nodes belong to the current state, while the gray nodes do not. The red edges denote the irreducible edges.}
\end{figure}

Out of two main objectives, computing the diameter requires only the topological information of the graph, whereas verifying if an ideal is linearly-presented further requires the textual information associated to the nodes.
From the reduction algorithm \ref{alg:reduction}, it is evident that an effective method for representing the textual information is via binary encoding of generators.
In particular, we use the map:
\begin{gather}\label{eq:binaryMap}
    x_{i_1}x_{i_2}\dots x_{i_d} \longmapsto \sum_{j=1}^d 2^{i_j-1}\,,
\end{gather}
and use the binary sequences of length $n$ of the resulting number to represent the features associated to each node. 
Noting that the reduction algorithm \ref{alg:reduction}, requires pairwise features, which consist of binary logical operations such as $\&$ and $|$ on pairs of nodes, we create a message-passing layer, which we call \textit{Syzygy-aware} message-passing (SConv), which leverages cross-attention to compute the aggregation function. 
In particular, we compute the cross attention for an edge connecting a source node with a target node with binary sequence representations $v^\mathrm{src}$ and $v^\mathrm{tgt}$ respectively\footnote{See Appendix~\ref{a:sconv} for a more detailed definition of $\mathsf{SConv}$.}:
\begin{gather}
    \mathrm{Aggr}(v^\mathrm{src},v^\mathrm{tgt}) = \mathrm{CrossAttn}(v^\mathrm{src}, v^\mathrm{tgt}, v^\mathrm{tgt})\,.
\end{gather}
We combine the resulting features with multiple layers of GAT \cite{velikovi2017graph} using a GPS transformer \cite{10.5555/3600270.3601324}. 
We demonstrate the effectiveness of this model via a brief ablation study in Fig.~\ref{fig:benchmarks}.

\section{Agentic Search for non-Hirsch Ideals}\label{sec:bottleneck} %
\begin{figure}[h!]
    \centering
    \includegraphics[width=1.0\linewidth]{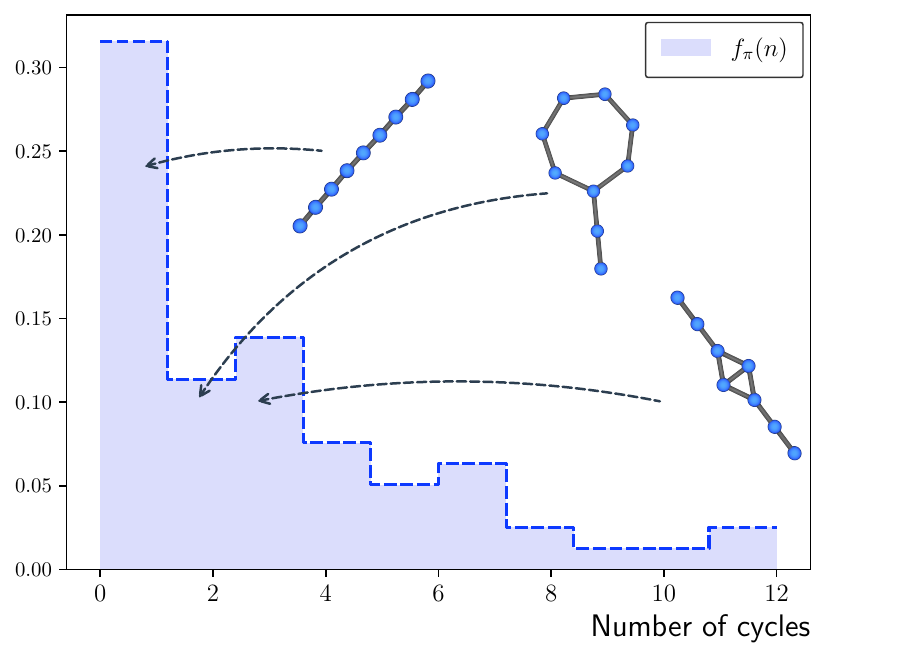}\vspace{-0.7em}
    \caption{Frequency $f_\pi(n)$ in \eqref{eq:frequency} of the simplest graphs encountered in the trajectories sampled via behavioral policy $\pi$ trained using SAC.}
    \label{fig:bottleneckDistribution}
\end{figure}
We formulate the problem of constructing non-Hirsch ideals as an MDP $\mathcal{M}=(\mathcal{S}, \mathcal{A}, p, r, \gamma)$ where the transition probabilities $p$ are non-stochastic $p(s'|s,a) = \delta_{a(s)}(s')$, where $\delta$ is the delta distribution: $\delta_{a}(b) = 1$ if $a=b$ and $0$ otherwise. 

Given that an arbitrary state in general is neither linearly presented nor does it have diameter exceeding the degree as in Equation~\eqref{eq:algebraicHirsch}, we first consider a ``naive'' reward function defined as a cumulative penalty of the required conditions: $r(s,a,s') = r_{L}(s,a,s') + r_D(s,a,s')$ where $r_D(s,a,s') = -1$ if $\mathrm{diam}(s')\leq d$ and $0$ otherwise. Similarly:
\begin{gather}\label{eq:naiveRewards}
    r_L(s,a,s') = \begin{cases}1& s'~\text{is linear and}~\mathrm{diam}(s')>d\\ 0&\text{otherwise}\,.\end{cases}
\end{gather}
Linearity checks were initially done via Macaulay2 \cite{M2}, which posed a significant computational bottleneck.
To circumvent this, we trained a supervised model to predict if a given ideal is linearly presented, which we then used for approximating $r_L$ reward function. %
We find that both PPO and SAC require large amount of iterations to produce solutions (as indicated in Table.~\ref{tab:benchmarks}), rendering them infeasible for scaling. We instead use the results to identify bottleneck states \cite{bottleneckHRL}.

Standard techniques, such as eigenoptions \cite{machado2018eigenoption} and successor representations, use state-visitation statistics to identify the bottleneck states. 
Instead we focus on identifying the simplest graph in each trajectory since tabular methods are infeasible and we do not have useful feature-level representations for the purposes of eigenoptions. 
We use the number of cycles $b_1(G)$ as a measure of ``complexity'' of a graph $G$.
Instead of counting the frequency of visitations of a given state (as in SR), we focus on the distribution of the least complex graphs over all trajectories.
Let $\pi$ be a behavioral policy;
we consider the probability that the graph has $n$ cycles along a trajectory sampled via $\pi$, :
\begin{gather}\label{eq:frequency}
    f_\pi(n) = \mathbb{E}_{\tau\sim \pi}\left[\mathbf{1}\left(\min_{t\geq 0}b_1(s_t) = n\right)\right]\,.
\end{gather}
Using the trajectories collected with SAC, we construct a Monte-Carlo approximation of the distribution of $f_\pi(n)$ in Equation~\eqref{eq:frequency}, shown in Fig.~\ref{fig:bottleneckDistribution}. Trees and tadpole-like graphs emerge as the most common types of least-complex graphs encountered along the trajectories. This motivates introduction of temporal abstractions which focus solely on constructing these bottleneck states, which we will discuss in-depth in the next section.

\section{Spines and Temporal Abstractions}\label{sec:spinesAndTempAbstractions} %

While standard RL algorithms with a naive reward function \eqref{eq:naiveRewards} are able to produce some non-Hirsch ideals in lowest non-trivial degree $d=4$, they fail at producing results for larger degrees. Based on the results, we have identified a prevalence of certain path graphs which appear in the majority of trajectories sampled via the optimized behavior policy. In this section, we first quantitatively assess the effectiveness of the approach and then reformulate the problem within options framework to significantly accelerate the construction of non-Hirsch ideals.

\subsection{Spines}\label{sec:spines}
One of the main conditions for identifying non-Hirsch ideals is the large diameter condition in \eqref{eq:algebraicHirsch}.
The results of Section \ref{sec:bottleneck} motivate the following definition:
\begin{definition} 
An ideal $I$ is a \textit{spine} if the graph $G_I$ associated to the ideal is a path graph whose diameter $\mathrm{diam}(I)$ satisfies the large diameter condition in \eqref{eq:algebraicHirsch}.
\end{definition}

As part of the ablation study of spines, we evaluate the impact on performance of replacing the prior distribution $p(s_0)$ with a ``uniform'' distribution of spines $p_S(s_0)$. 
Uniformly distributed spines are constructed procedurally using random-DFS on the graph $G_{\mathsf{max}}(d,n)$ (as defined in Section~\ref{sec:setup}) with a random-uniform node selection.
Instead of the naive rewards in \eqref{eq:naiveRewards}, we construct and shape rewards using different approaches.

\textbf{1. Step-cost reward}: Given a randomly initialized spine, our goal is to find nearest non-Hirsch ideals.
In order to ensure that the optimal value function $V^*$ is equivalent to the cost-to-go function of the environment, we assign a $-1$ penalty for each step, effectively turning the reward function to $r(s,a,s') = -1$.
Using the spines $s_0\sim p_S(s_0)$ as initial states in lieu of $p(s_0) = \mathcal{U}\{0,1\}^{N(n,d)}$, where $\mathcal{U}\{0,1\}$ is the uniform distribution on $\{0,1\}$, the agents are able to find non-Hirsch ideals of degree $d=5$.

\textbf{2. HuRL \cite{cheng2021heuristic}}: Since a heuristic \eqref{eq:heuristic} can be efficiently computed, we may use it for shaping the reward function $r(s,a) = -1$ in real time. There are two major approaches for potential-based shaping: a PBRS shaping \cite{10.5555/645528.657613}, which provides an optimal policy-preserving transformation of the reward function:
\begin{gather}
    r(s,a,s') \rightarrow r(s,a,s') + \gamma\phi(s') - \phi(s)\,,
\end{gather}
using a potential function $\phi$, and a Heuristic-Guided RL \cite{cheng2021heuristic}, which transforms the original MDP $\mathcal{M}$ to a $[0,1]\ni\lambda$-scheduled family of MDPs $\tilde{\mathcal{M}}_\lambda = (\mathcal{S}, \mathcal{A}, p,\tilde{r}_\lambda, \lambda \gamma)$ where:
\begin{gather}
    r(s,a,s') \rightarrow r(s,a,s') + (1-\lambda )\gamma \phi(s')\,,
\end{gather}
where $\lambda$ moves from $0$ to $1$. 
While we've observed some marginal improvement in the performance for $d=4$, we were not able to substantially improve the performance for higher degrees $d>4$, as indicated in the Table~\ref{tab:benchmarks}.

\textbf{3. Population Buffer \cite{swirszcz2025advancing}}: Another technique that leverages the heuristic is to maintain a priority buffer, with priority function given by the heuristic in \eqref{eq:heuristic}. In \cite{swirszcz2025advancing}, the authors describe the ``Hopper'' algorithm, wherein the procedure involves sampling an element from the priority buffer, attempting to improve it (as measured by improvement in the heuristic) and, if successful, storing the improved state back to the buffer. 
If the heuristic provides useful intermediate goals, the average score of the buffer increases.
For non-Hirsch ideals, we define priority function $P(I)$ as a combination of the number $h(I)$ of irreducible ideals defined in \eqref{eq:heuristic} and the diameter $\mathrm{diam}(I)$ of the graph associated to the ideal:
\begin{gather}
    P(I) := -h(I) -\left|\mathrm{diam}(I) - (d+1)\right|\,~.
\end{gather}
While we observe that this approach outperforms previous techniques by making $d=5$ non-Hirsch ideals accessible by RL, it struggles to scale to larger degrees.
\begin{figure}[h!]
    \centering
    \includegraphics[width=1.0\linewidth]{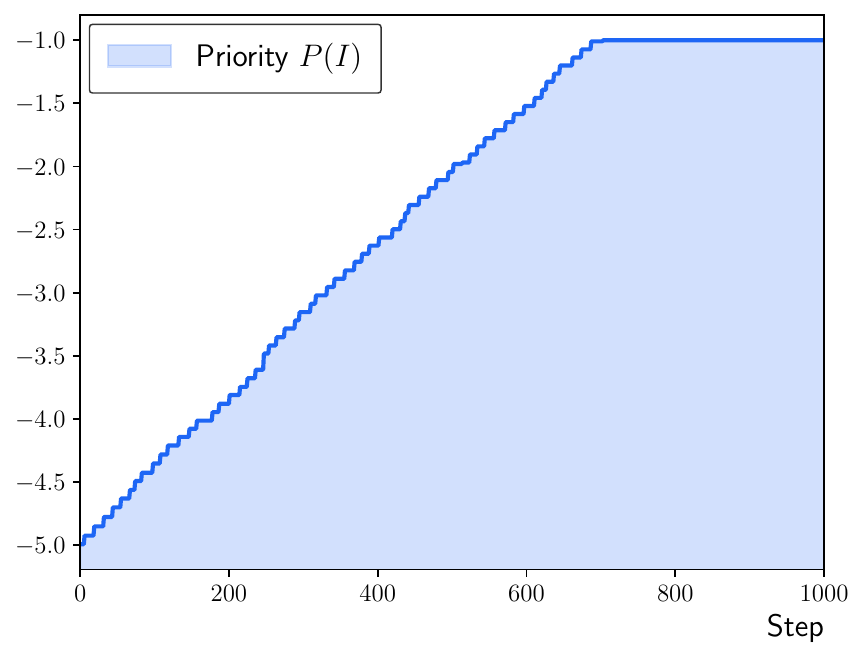}
    \caption{Mean priority $P(I)$ of ideals $I$ in the buffer during training for degree $d=6$.}
    \label{fig:priorityFn}
\end{figure}
In particular, we observe in Fig.~\ref{fig:priorityFn} that while the average priority score is initially monotonically increasing, it stops once the buffer consists of graphs which satisfy the diameter constraint 
but have a single irreducible edge. This indicates that for larger degrees, the heuristic alone does not provide sufficiently even-spaced goals to allow the model to reduce the last remaining edge while still satisfying the diameter constraint.

{\bf 4. Hindsight Experience Replay \cite{andrychowicz2017hindsight}}: Given a state $s$, we construct a goal $g_s$ associated to the state $s$ as $g_s = (\mathrm{diam}(s), h(s))$ where $h$ is the heuristic \eqref{eq:heuristic}. The target goal is $g_\mathrm{tgt} = (d+1,~0)$. We shape the goal-based reward function as:
\begin{gather}
    r_g(s,a,s') = -[g_{s'} \neq g]\,.
\end{gather}
We observe some improvement for constructing ideals in degree $d=4$ (See SAC+HER in Table~\ref{tab:benchmarks}), however, the approach struggles to construct ideals in higher degrees.

\subsection{Chained Constrained Options}\label{sec:cco}
Every non-Hirsch ideal contains a spine. Therefore, given that the agent defined in \ref{sec:spines} builds a non-Hirsch ideal from spines, we may ask whether, given a spine, there always exists a non-Hirsch ideal $I$ for which the spine is a diameter-realizing path for \eqref{eq:algebraicHirsch}. If this holds, we call the spine a \textit{supporting} spine.

For any supporting spine $s_0$, we can construct a trajectory $\tau = (s_0,\dots, s_T)$ where $s_T$ is a non-Hirsch ideal, such that every intermediate state $s_t$ satisfies the diameter constraint $\mathrm{diam}(s_t)>d$ (See Thm.~\ref{thm:spineToLin}).

\begin{figure*}[t!]
    \centering
    \includegraphics[width=1.0\linewidth,  trim={0.1cm 0cm 0cm 0cm}, clip]{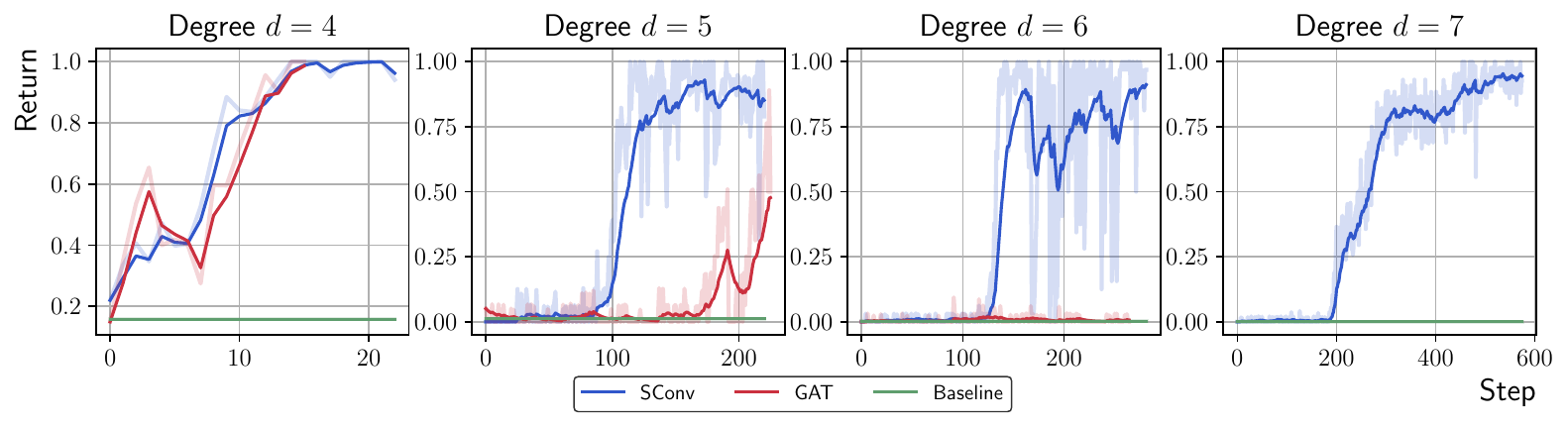}
    \caption{Average episodic return over degrees $d=4$ to $d=7$. The SConv (Section~\ref{sec:sconv}) consistently outperforms the baseline and GAT models. Both SConv and GAT were trained by optimizing $\pi_S$ intra-option hard-constrained policy. We used $\nicefrac{1}{10}$ the learning rate for $d=7$ to mitigate stability issues. The inflection points of GAT for degrees $d>5$ occur far beyond the number of steps displayed in the figures and are therefore omitted.}\vspace{-1.0em}
    \label{fig:benchmarks}
\end{figure*}

\begin{figure*}[t!]
    \centering
    \includegraphics[width=1.0\linewidth, trim={0cm 0cm 0cm 0cm}, clip]{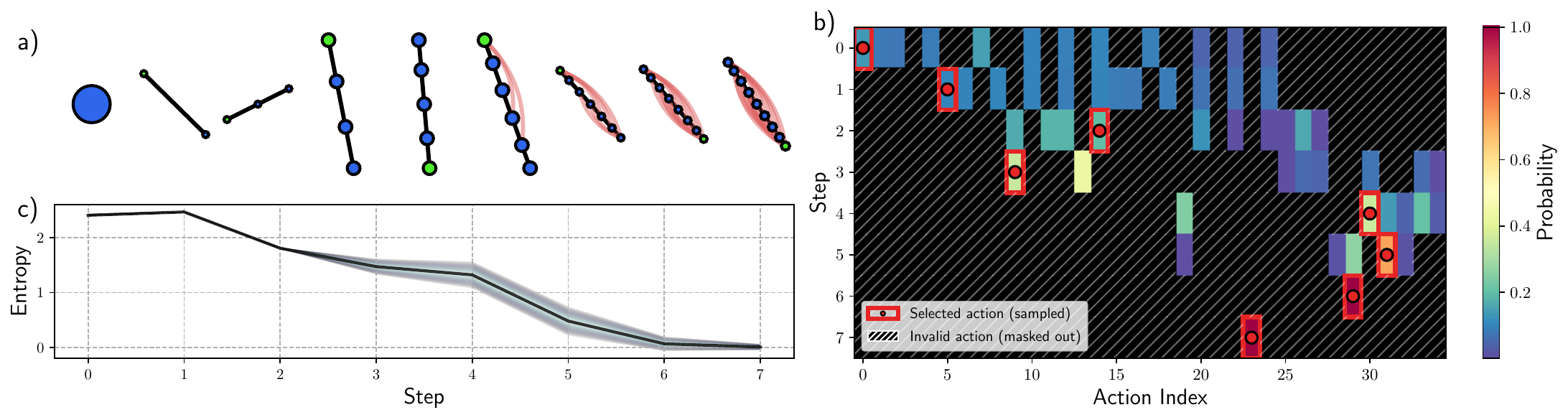}\vspace{-0.5em}
    \caption{
    (a) Visualization of a trajectory sampled using learned intra-option spine policy for $d=4$, with the policy shown in (b).
    (c) shows the entropy of the admissible actions is decreasing as the number of cycle-free actions decreases with the number of nodes.
    }\label{fig:actions}\vspace{-1em}
\end{figure*}

By measuring the likelihood of randomly sampled spine being a supporting spine in the sense defined above, we find that $\ll 1\%$ of spines have this property (see Best-first search in Table~\ref{tab:benchmarks}).
This observation motivates us to reformulate the reinforcement learning framework within constrained options framework with two chained temporal abstractions: spines (S) and linearizations (L). We illustrate this procedure in Fig.~\ref{fig:options}. 
In particular, we extend the notion of options by incorporating intra-option policy constraints. 
More specifically, we introduce options:

\begin{gather}
    \omega_S = (I_S, \pi_S, \mathcal{C}_S, \beta_S),\quad \omega_L = (I_L, \pi_L, \mathcal{C}_L, \beta_L)\,,
\end{gather}
where $\mathcal{C}_S$ and $\mathcal{C}_L$ are constraints on the intra-option policies $\pi_S$ and $\pi_L$, respectively. We define the initialization set $I_S$ of the spine policy to consist of single-node graphs and $\beta_S = I_L$ as the set of all path graphs of diameter strictly larger than the degree. 
The constraint $\mathcal{C}_S$ for procedurally constructing a supporting spine is essentially restricting the intra-option policy $\pi_S$ to ensure monotonicity of the diameter:
\begin{gather}\label{eq:constraintS}
    \mathcal{C}_S(s_t) = \left\{a_t\in\mathcal{A}~\vert~ \mathrm{diam}(s_t) < \mathrm{diam}\left(a_t(s_t)\right) < \infty\right\}\,,
\end{gather}
whereas, for the linearization option $\omega_L$, the constraint $\mathcal{C}_L$ ensures that the diameter of the subsequent states remain above the degree:
\begin{gather}\label{eq:constraintL}
    \mathcal{C}_L(s_t) = \left\{a_t\in\mathcal{A}~\vert~ \mathrm{diam}\left(a_t(s_t)\right) > d\right\}\,.
\end{gather}
Due to a specific choice of the terminations $\beta_S$ and $\beta_L$, the distribution $p(\tau)$ of the trajectories \cite{bacon2017option} sampled using options $\omega_S$ and $\omega_L$ takes a particularly simple form:
\begin{align}
    p(\tau) = p(s_0)&\prod_{t'=T_S}^{T_S+T_L-1}p(s_{t'+1}|s_{t'},a_{t'})\pi_L(a_{t'}|s_{t'})\cdot\\
    &\notag\hspace{0.797em}\prod_{t=0}^{T_S-1}p(s_{t+1}|s_t,a_t)\pi_S(a_t|s_t)\,,
\end{align}
where $T_S$ denotes the number of steps required for an agent to construct a spine of diameter $>d$ starting from a single node. Generally, for stability purposes, we choose the threshold on the diameter to be slightly above $d$ and allow the agent to execute the linearization option $\omega_L$ at each step satisfying \eqref{eq:algebraicHirsch}.

\begin{figure}[h!]
    \centering
    \includegraphics[width=1.0\linewidth, trim={0cm 0cm 0cm 0cm}, clip]{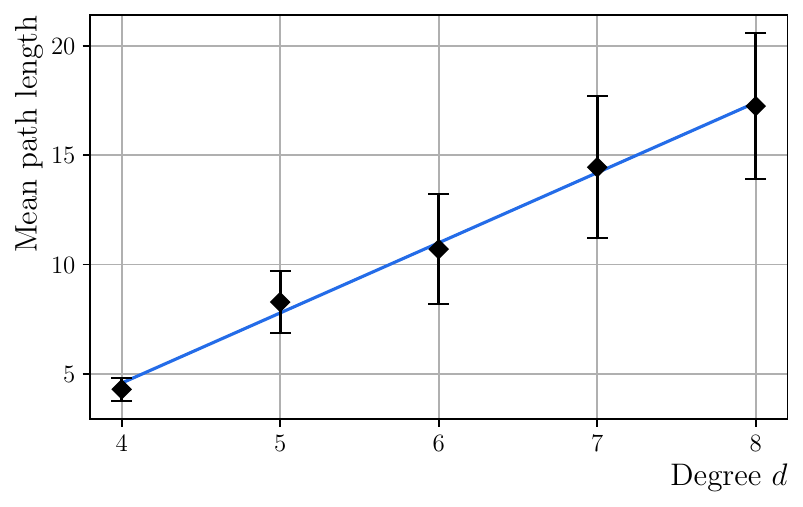}
    \caption{Mean path length of the heuristic $h$-based linearization policy $\pi_L^{A^*,h}$ constructing non-Hirsch ideal from a supporting spine.}\label{fig:meanPathLength}\vspace*{-0.8cm}
\end{figure}

We establish the baseline by letting $\pi_S^{\mathrm{baseline}}(a_t|s_t) = \mathcal{U}(\mathcal{C}_S(s_t))$, the uniform distribution on $\mathcal{C}_S(s_t)$, and $\pi_L^{\mathrm{baseline}}(a_t|s_t)$ be restricted A* search policy $\pi_L^{A^*,h}(a_t|s_t)$ using heuristic $h(I)$ from \eqref{eq:heuristic} with intra-option constraint $\mathcal{C}_L$ from \eqref{eq:constraintL}. While likelihood of randomly constructing a supporting spine via a baseline policy $\pi_S^{\mathrm{baseline}}$ is small, the linearization policy $\pi_L^{A^*,h}$ using heuristic $h$ from \eqref{eq:heuristic} successfully terminates in all cases that we have tested. This is mostly due to the fact that heuristic seems to provide sufficient information for introducing new nodes which eliminate irreducible edges. Furthermore, we observe that the mean number of steps required to linearize a supporting spine using the policy $\pi_L^{A^*,h}$ scales linearly with the degree $d$ (see Fig.~\ref{fig:meanPathLength}). Therefore, we shall fix $\pi_L = \pi_L^{\mathrm{baseline}}$ and focus on optimizing the spine policy $\pi_S$. For assigning the rewards, we use goal-hitting sparse-reward function:
\begin{gather}
    r(s,a,s') =
        \begin{cases}
            1&\text{if}~s'~\text{is non-Hirsch}\\
            0&\text{otherwise}
        \end{cases}~.
\end{gather}
While PPOC \cite{2017arXiv171200004K} extends PPO to optimize the terminations, given that we do not parametrize terminations, we focus only on update rules corresponding to the policies. 

A major difficulty is correctly constraining the policies without sacrificing the stability of the training. This leads us to consider two different strategies for enforcing the constraints $\mathcal{C}_S$ and $\mathcal{C}_L$: \textit{hard constraints} and \textit{soft constraints}, which we discuss in depth below.

{\bf Hard Constraints \cite{huang2022closer}}: Given a state $s_t$ at step $t>0$, we define a mask which replaces the logits of the actions $a_t\notin \mathcal{C}_S(s_t)$ by a large negative number $M \approx -\infty$. This effectively restricts the allowed set of actions whilst maintaining well-define policy-gradients \cite{huang2022closer}. However, the key difficulty is the computational complexity of verifying set membership $a_t\stackrel{?}{\in} \mathcal{C}_S(s_t)$ due to the combinatorial scaling \eqref{eq:stateActionScaling} of the number of actions $\mathcal{A}$. Since this is unavoidable, we address this by parallelizing the membership checks over multiple cores.

{\bf Soft Constraints \cite{ray2019benchmarking}}: An alternative approach is to add a adaptive penalty to the agent for violating the constraints. However, we observe that this approach doesn't produce solutions faster than the baseline. See Appendix~\ref{a:nonspines} for more details. Therefore, we mainly focus on policies with hard constraints.

The average episodic returns of agents trained using the hard constraints are shown in Fig.~\ref{fig:benchmarks}. We observe that despite masking out invalid actions in the policy, the training remains stable. The agents are able to learn useful intra-option policy $\pi_S$, which significantly outperforms the baseline. Furthermore, both the baseline and constrained options policy outperform other RL approaches, as evidenced by the results in Table~\ref{tab:benchmarks}.

In Fig.~\ref{fig:benchmarks}, we also present a brief comparison of the Syzygy-Aware graph convolutions (Section~\ref{sec:sconv}) versus regular Graph Attentions. We observe that $\mathsf{SConv}$ exhibits earlier convergence compared to Graph Attention, which we attribute to the fact that the attention matrices between pairs of nodes resemble the bit-wise binary operations required in syzygy reduction algorithm \ref{alg:reduction}.

Visualizing the actions performed by the agent in Fig.~\ref{fig:actions}, we observe that the constraints significantly reduce the space of valid actions. Furthermore, as the agent builds a spine, this space continues to shrink, as indicated by decreasing entropy. We list some of the examples of non-Hirsch ideals of degree $d=7$ constructed using RL agent in the Appendix~\ref{a:examples}.

\section{Conclusion}\label{sec:conclusion}
In this work, we demonstrate that temporal abstractions are not only beneficial but also crucial for solving an extremely sparse-rewards search problem in commutative algebra. 
By analyzing the failure modes of the standard end-to-end RL algorithms, we were able to identify rare bottleneck states, called supporting spines, which have allowed us to reformulate the problem within the framework of constrained options in HRL.

Our results show that while the linearization of a supporting spine can be easily accomplished via available heuristics, the key difficulty lies in discovering such spines in the first place.
Learning a dedicated policy for constructing such spines is therefore the primary objective and leads to substantial gains compared to what was previously accessible.
To our knowledge, this is the first demonstration that learned temporal abstractions enable sparse-reward search in commutative algebra and, more broadly, in similar environments where solutions are vanishingly rare.

Beyond this specific application, our work motivates a general paradigm for approaching ``needle-in-a-haystack'' agentic search problems in mathematics: identify the bottleneck states by observing the behavior of standard RL algorithms and design temporal abstractions which significantly reduce the complexity of the problem.

\section*{Acknowledgements}
The project was sponsored by the Defense Advanced Research Projects Agency under cooperative agreement HR0011262E017, by the NSF AIMing grant 2522494, by the DRW Foundation, and by a philanthropic gift from Les Kohn. M.T. is also supported by the U.S. Department of Energy (Grant No. DE-SC0011632) and by the Walter Burke Institute for Theoretical Physics. Additionally, this work was supported with Cloud TPUs from Google's TPU Research Cloud (TRC), with GPUs from the NVIDIA Academic Grant Program, by cloud computing resources provided by Nebius through the Research Program of Nebius Academy, and by Advanced Micro Devices, Inc. under the AMD University Program’s AI \& HPC Cluster. The content of the information does not necessarily reflect the position or the policy of the Government, and no official endorsement should be inferred.

{\bf Approved for public release; distribution is unlimited.}

\section*{Impact Statement}
This paper presents work whose goal is to advance the field of machine learning and mathematical reasoning. There are many potential societal consequences of our work, none of which we feel must be specifically highlighted here.

\bibliography{ref}
\bibliographystyle{icml2026}

\newpage
\appendix
\onecolumn
\section{Mathematical Problem: Relevant Background}\label{a:linear}

Monomial ideals serve as a bridge between commutative algebra and combinatorics.
A direct connection between the Hirsch conjecture and the linear presentation of monomial ideals is implicated in a conjecture proposed by Gil Kalai in an algebraic form \cite{Kalai2009}:
Consider the collection $\GG'(k,n)$ of graphs $G$ whose vertices are labeled by $k$-subsets of an $n$ element set, in such a way that if $v$ is a vertex labeled by the set $S$ and $u$ is a vertex labeled by the set $T$, then there is a path between $u$ and $v$ so that all labels of its vertices are sets containing $S \cap T$.

\begin{conj}\label{KalaiConj}
Kalai's Abstract Polynomial Hirsch Conjecture (APHC): If $G \in {\mathcal G} (k,n)$ is connected, then the diameter of $G$ is bounded above by a polynomial in $k$ and $n$.
\end{conj}

The original Hirsch conjecture  is related to this as follows. Let $P$  be a polytope in $N$-space, with facets labeled $1\dots,n$, and let each vertex be labeled by the sets of $N$ facets that intersect in it (thus a vertex may have several labels). If  $v$ is a vertex labeled by a set  $S$ and $u$ is a vertex labeled by a set $T$ and $S\cap T$ is non-empty, then $u$ and $v$ lie on the same facet and are obviously connected by a path traversing the edges of this facet.
Thus, Kalai's conjecture would say that the diameter of the 1-skeleton of $P$ is bounded by a polynomial in $n$ and $k$. Of course, the graph formed by the skeletons of polytopes is very special.

Let $d = n-k$, and consider a relabeling of the vertices of graphs in $\GG' (k,n)$
with the complements of the original labels. Thus, we arrive at the set $\GG (d,n)$ of connected graphs whose vertices are labeled by $d$-element subsets of $1,\dots,n$ in such a way that \emph{If there is a path between vertices $u$ and $v$, then there is a path consisting of vertices whose labels are subsets of $A \cup B$.}

\begin{proposition}\cite{dao2022linearity}
Let $G$ be a graph whose vertices are labeled by $d$-element subsets of $1,\dots,n$ as above, and let $I$ be the (square-free) monomial ideal in a polynomial ring 
$k[x_{1},\dots, x_{n}]$ generated by monomials 
$$
\{x_{i_{1}}\cdots x_{i_{d}} \mid \{i_{1},\dots, i_{d}\} \hbox{ is the label of a vertex of }G\}
$$
corresponding to the vertex labels.
The graph $G$ belongs to $\GG (d,n)$ if and only if the ideal $I$ has a linear presentation.
\end{proposition}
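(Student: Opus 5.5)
The plan is to work in the $\mathbb{Z}^n$-multigraded setting and reduce linear presentation to a connectivity statement about induced subgraphs. I will read the statement with two conventions. First, $G$ is the graph $G_I$ of Section~\ref{sec:prelim}: its vertices are the $d$-subsets labelling generators, and two vertices are adjacent exactly when the labels differ in one element, i.e.\ $|A\cup B|=d+1$. Second, membership in $\GG(d,n)$ includes connectedness of $G$, and the path condition is read as: for all labels $A,B$ there is a path from $A$ to $B$ in $G$ whose vertex labels are all subsets of $A\cup B$.

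Write $m_A=\prod_{i\in A}x_i$ and let $F=\bigoplus_A S e_A \to I$, $e_A\mapsto m_A$. By the standard description of syzygies of monomial ideals (Taylor), the syzygy module $\mathrm{Syz}(I)$ is generated by the pairwise syzygies
\[
\sigma_{AB}=\frac{m_{A\cup B}}{m_A}e_A-\frac{m_{A\cup B}}{m_B}e_B ,
\]
and $\sigma_{AB}$ has multidegree $x_{A\cup B}$ and total degree $|A\cup B|$. The syzygies of degree $d+1$ are exactly the $\sigma_{CD}$ with $C,D$ adjacent in $G$. So $I$ is linearly presented iff every $\sigma_{AB}$ lies in the submodule $L$ generated by the $\sigma_{CD}$ with $CD$ an edge. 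Since $L$ is multigraded, this holds iff $\sigma_{AB}\in L_{x_{A\cup B}}$, the component of $L$ in multidegree $x_{A\cup B}$.

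The key step is to compute that component. Because everything is squarefree, an element of $F$ of multidegree $x_U$ (with $U=A\cup B$) is a $k$-linear combination $\sum_{C\subseteq U}c_C\,(x_{U\setminus C})e_C$. The monomial coefficients are forced, so $F_{x_U}\cong k^{\{C\subseteq U\}}$, and $\sigma_{AB}\mapsto e_A-e_B$. An element of $L_{x_U}$ is a sum of terms $\mu\,\sigma_{CD}$ with $\mu$ a monomial. The multidegree forces $C\cup D\subseteq U$ and $\mu=x_{U\setminus(C\cup D)}$, and each such term maps to $e_C-e_D$. Hence $L_{x_U}$ is the span of $\{e_C-e_D : CD \text{ an edge of } G_U\}$, where $G_U$ is the induced subgraph of $G$ on labels contained in $U$. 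By the elementary fact about the incidence space of a graph, $e_A-e_B$ lies in this span iff $A$ and $B$ lie in the same connected component of $G_U$. For the nontrivial direction, any linear functional constant on the component of $A$ kills all the $e_C-e_D$ but not $e_A-e_B$ when $B$ is in a different component.

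Combining the two steps, $I$ is linearly presented iff for all labels $A,B$ there is a path in $G$ from $A$ to $B$ through labels contained in $A\cup B$. This is exactly the defining condition of $\GG(d,n)$, and it also forces $G$ to be connected. Conversely, if $G$ is disconnected, then for $A,B$ in different components the syzygy $\sigma_{AB}$ cannot be generated by linear ones, as in the example $(x_1x_2,x_3x_4)$. The main point needing care is the multidegree bookkeeping in the second step: I need to justify that squarefreeness makes each multigraded piece of $F$ a vector space with forced monomial coefficients. It is also the step that pins down exactly which edges, namely those within $A\cup B$, can contribute.
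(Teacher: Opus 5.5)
Your proof is correct, and it takes a different route from the paper. The paper does not prove this proposition itself; it defers to \cite{dao2022linearity}. Its closest argument is Theorem~\ref{thm:reducible} together with the correctness theorem for Algorithm~\ref{alg:reduction}.

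\textbf{Comparison.} The first reduction there matches yours: $I$ is linearly presented iff every Schreyer syzygy $s(e)$, $e\in E_{l>1}$, lies in the $R$-span of the linear syzygies $s(E_1)$. After that the two routes diverge.
\begin{itemize}
\item The paper looks for a single cycle through $e$ and edges of $E_1$. It propagates monomial coefficients $r(e_i)$ by divisibility checks and closes up by telescoping.
\item You pass to the multidegree $x_{A\cup B}$ component, where every coefficient is a scalar times a forced monomial. Membership then becomes pure linear algebra: whether $e_A-e_B$ lies in the edge span of the induced subgraph $G_{A\cup B}$.
\end{itemize}
Your route is shorter. It also gives a clean per-edge criterion: $(A,B)$ is reducible iff $A$ and $B$ are joined by a path of $E_1$-edges through labels contained in $A\cup B$. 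In particular, it supplies a step the paper's argument only asserts, namely that reducibility yields one such cycle rather than merely some $R$-combination. The paper's coefficient-tracking formulation is aimed instead at an explicit edge-by-edge test. That test parallelizes on GPUs and produces the irreducible-edge count used as the heuristic $h(I)$.

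\textbf{Two small remarks.} First, your reading of $G$ as $G_I$ (vertices adjacent iff their labels differ in one element) is forced, not just convenient. With an arbitrary edge set the statement fails: two vertices labeled $\{1,2\}$ and $\{3,4\}$ joined by an edge form a graph in $\GG(2,4)$, yet $(x_1x_2,x_3x_4)$ is not linearly presented.

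Second, the step you flag as needing care does not use squarefreeness. For any monomial multidegree $\mathbf{b}$, the piece $F_{\mathbf b}$ has $k$-basis $(x^{\mathbf b}/m_C)\,e_C$, with $m_C$ ranging over the generators dividing $x^{\mathbf b}$. Squarefreeness only turns divisibility into containment. So your argument works verbatim for any monomial ideal generated in a single degree $d$. Adjacency then means $\deg \mathrm{lcm}=d+1$, and ``labels contained in $A\cup B$'' becomes ``generators dividing $\mathrm{lcm}(m_A,m_B)$''.
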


Let $I$ be the (square-free) monomial ideal in a polynomial ring $k[x_{1},\dots, x_{n}]$ generated by monomials corresponding to the vertex labels,
$$
\{x_{i_{1}}\cdots x_{i_{d}} \mid \{i_{1},\dots, i_{d}\} \hbox{ is the label of a vertex of }G\}
$$

Given any square-free monomial ideal $I\subset k[x_{1}, \dots, x_{n}]$ we form a labeled graph $G_I$ whose vertices are the minimal monomial generators, and whose labels are the indices of the factors of the generators. With this language we may restate Kalai's conjecture as:

\begin{conj}\label{Conj0}
If $I$ is a square-free monomial ideal in $n$ variables, generated in degree $d$ and having a linear presentation, then the diameter of $G_I$ is bounded above by a polynomial in $d$.
\end{conj}

\begin{remark}
The process of \emph{polarization} can be used to replace any monomial ideal by a square-free one with a similar presentation, so the conjecture is equivalent to the same statement for all monomial ideals, taking the labels as multisets or as the monomials themselves.
\end{remark}

For very small $d$, the Conjecture \ref{Conj0} holds true and even satisfies the original Hirsch-style bound $\mathrm{diam}(I) \le d$, \emph{cf.} \eqref{eq:algebraicHirsch}. For example, with $d=2$, consider any two vertices $ab,cd$. By local connectedness, there is a path between them involving only variables $a,b,c,d$, which is easily seen to be, up to renaming, $ab,bc,cd$. For $d=3$, see \cite{morales-etal2014}. More generally, for each $d$, there is a finite list of ``obstructions'' to linearity. But these lists grow rapidly with $d$ and so far have not been very useful for  $d>3$.

A central problem in addressing Conjecture~\ref{Conj0}, and more generally in studying the linear presentation of monomial ideals, is to find examples of monomial ideals $I$ generated in degree $d$ having linear presentation where the diameter of $G_I$, divided by $d$, is as large as possible. So far, the best we can do ``by hand'' is an ideal in 10 variables $a,\dots j$ for which $d = 7$ and the diameter of $G_I$ is 9, shown in Fig.~\ref{fig:nonHirschD2}.

\begin{figure}
    \centering
    \includegraphics[width=1.0\linewidth]{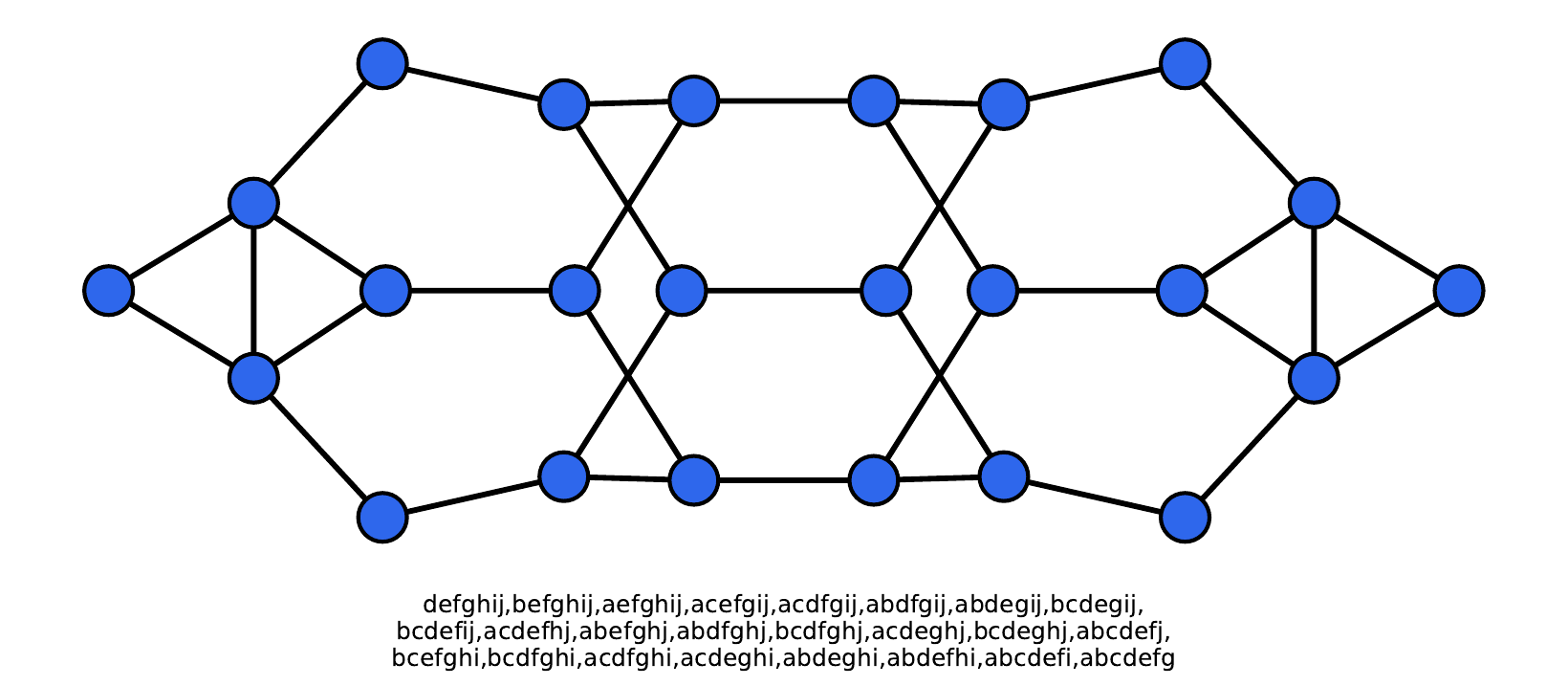}
    \caption{A non-Hirsch ideal of degree $d=7$ for which the diameter is $9$.}
    \label{fig:nonHirschD2}
\end{figure}

\subsection{Edge Reduction Algorithm}

Let $R = k[x_1,\dots x_n]$ be a polynomial ring. By applying Hilbert Syzygy Theorem to $I$ treated as a finitely generated $R$-module, we may construct its minimal finite resolution:
\begin{gather}\label{eq:resolution}
    0\longrightarrow F_n \longrightarrow \dots \longrightarrow F_1\stackrel{\phi_1}{\longrightarrow} F_0 \stackrel{\phi_0}{\longrightarrow} I \longrightarrow 0\,,
\end{gather}
where each $R$-module $F_i$ for $0\leq i \leq n$ is finitely generated and free. Each $F_{0\leq i\leq n}$ may further be decomposed into a direct sum of degree-shifted modules:
\begin{equation}
    F_i \simeq \bigoplus_{j=0}^{m_i} R(-i-j)^{\beta{ij}}\,,
\end{equation}
where $\beta_{ij} = \dim _k\mathrm{Tor}^R_i(I, k)_j$ are the Betti numbers and $\sum_{j=0}^{m_i}\beta_{ij} = \mathrm{rank}~F_i$. A monomial ideal $I$ is \textit{linear} if all maps in \eqref{eq:resolution} are of degree one. We are interested in a slightly weaker condition, where we are mainly focused on the linearity of the presentation of the ideal $I$ in the resolution. In particular, we define $I$ to be \textit{linearly presented} if the map $\phi_1$ in the resolution \eqref{eq:resolution} has degree one. We may check if an ideal is linearly presented by computing the reduced Gr\"obner basis of the first syzygy module $\mathrm{Syz}_1(I)$, defined as $\ker{\phi_0}$. While generally this can be accomplished using standard computer algebra frameworks, such as Macaulay2 \cite{M2}, we observe that these methods are typically infeasible for ideals generated by a large number of monomials. This is especially important for reinforcement learning applications, where any computational cost during environment interactions may lead to prohibitively slow training as each action requires us to perform a linear presentation test. In order to mitigate the computational costs associated to the environment, we propose an alternative parallelized algorithm which allows us to utilize the computational advantages offered by modern CUDA-enabled GPUs.

Let $I = (f_i)_{i=1}^g$ where each $f_i$ is a monomial in variables $\{x_1,\dots ,x_n\}$, such that no two generators are the same and $\deg{f_i}=d$. Note that the set of monomials $\{f_i\}_{i=1}^g$ is a Gr\"obner basis of $I$ as by construction $\mathrm{in}_> f_i = f_i$, thus Buchberger's criterion is trivially satisfied. Furthermore, by applying Schreyer's theorem \cite{eisenbud1995commutative} to $I$, it is easy to see that an unreduced Gr\"obner basis of $\mathrm{Syz}_1(I)$ is given by:
\begin{gather}\label{eq:basis}
    S_{ij} := \frac{\mathrm{lcm}(f_i, f_j)}{f_j}\epsilon_j - \frac{\mathrm{lcm}{(f_i, f_j)}}{f_i}\epsilon_i = m_{ij}\epsilon_j - m_{ji}\epsilon_i\,,
\end{gather}
where $\{\epsilon_i\}_{i=1}^g$ is the canonical basis of $F_0$ such that $\phi(\epsilon_i) = f_i$. Unfortunately, such basis is typically far from being minimal. Checking if an ideal $I$ is linearly presented amounts to checking if the generators \eqref{eq:basis} of degree $>d+1$ can be expressed in terms of generators of smaller degree. While general heuristics, such as Gebauer--M\"oller conditions \cite{GEBAUER1988275} exist, those typically do not yield minimal bases. Furthermore, we are interested in refining the obstruction to linearity, in order to define a heuristic measuring how far away a given ideal is from being linearly presented. Therefore, we focus on the following objectives:

{\bf 1. Features}: Using the data associated to the first syzygy module $\mathrm{Syz}_1(I)$, provide an additional set of features which might be useful for the agent to identify useful moves and measure how far ideal $I$ is from being linearly presented.

{\bf 2. Parallelization}: Design a GPU-accelerated algorithm which efficiently checks if a given ideal $I$ is linearly presented.

Inspired by the results of \cite{dao2022linearity}, we design a reduction algorithm which reformulates the problem as graph reduction algorithms and provides a new set of features which we use to augment the graph $G_I$.

\subsubsection{Combinatorial Data of the First Syzygy Module}

\begin{algorithm}[t!]
    \caption{Edge Reduction Algorithm}
    \label{alg:reduction}
    \begin{algorithmic}[1]
    \STATE {\bf Input:} edge $e=(v,v') \in E_2$. {\footnotesize\emph{// Or any $E_{l>1}$}}
    \STATE {\bf Input:} first-level graph $G_1 = (V, E_1)$.
    \STATE {\bf Output:} $\mathsf{True}$ if $e$ is reducible. $\mathsf{False}$ otherwise.

    \STATE \raisebox{0.5ex}{\rule{\linewidth}{0.5pt}}

    \STATE \textbf{Define}: $m_{ww'} := w'~\&~ \neg w$ for all $w,w'\in V$.
    \STATE \textbf{Define}: $\mathcal{N}_v(G_1):=$ 1-hop neighborhood of $v\in V$ in $G_1$.
    \STATE $\mathcal{V}\leftarrow \emptyset$ \quad {\footnotesize\emph{// visited nodes}} %
    \STATE Initialize empty queue $q$
    \STATE Enqueue $q \leftarrow (v,m_{vv'})$

    \WHILE{$q$ is not empty}
        \STATE Dequeue $(w, m) \leftarrow q$.pop()
        \FOR{each $w'\in \mathcal{N}_w(G_1)$}
            \IF{$w'\in\mathcal{V}$}
                \item {\bf continue}
            \ENDIF

            \IF{$m~\vert~m_{vw'}= m$}
                \IF{$w' = v'$}
                    \item {\bf return} {$\mathsf{True}$}\quad  {\footnotesize\emph{// The edge $e$ was successfully reduced}}
                \ENDIF

                \STATE $r \leftarrow m~\&~\neg m_{ww'}$\quad {\footnotesize\emph{// Missing variables coming from $w$}}
                \STATE $q \leftarrow (w',~r~\vert~m_{w'w})$\quad{\footnotesize\emph{// Append missing variables to $m_{w'w}$}}
            \ENDIF
        \ENDFOR
        \STATE $\mathcal{V}\leftarrow \mathcal{V}\cup \{w\}$
    \ENDWHILE
    \STATE {\bf return} {$\mathsf{False}$}\quad {\footnotesize\emph{// If no path was found, then edge is irreducible}}
    \end{algorithmic}
\end{algorithm}

Given a square-free monomial ideal $I = (f_i)_{i=1}^g$ of degree $d>0$, we associate a multi-level graph $G_{1\leq l \leq d} = (V,E_l)$ to the first syzygy module $\mathrm{Syz}_1(I)$ of the ideal $I$. We define the vertices $V$ of $G_l$ as $\{v_1,\dots,v_g\}$, where $g$ is the number of generators of $I$. Furthermore, we associate each generator $f_i$ with $v_i$ via a map $\theta(v_i) = f_i$. For each level $l>0$, we define the set of edges $E_l$ as:
\begin{gather}
    E_l := \left\{(v,w)\in V~\vert~ \deg m\left(\theta(v), \theta(w)\right) = l\right\}\,,
\end{gather}
where $m(f,g) := \nicefrac{\mathrm{lcm}(f,g)}{g}$. Note that the edges $E_l$ are in one-to-one correspondence with the generators \eqref{eq:basis}:
\begin{gather}
    E_l\ni (\theta^{-1}(f_i),\theta^{-1}(f_j)) \stackrel{s}{\longmapsto} S_{ij}\,,
\end{gather}
therefore, all linear generators of $\mathrm{Syz}_1(I)$ correspond to the edges $E_1$ of the first-level component $G_1$ of $G$. Furthermore, testing if the ideal $I$ is linearly presented amounts to checking the properties of the graphs $G_{l>1}$. In particular, we define:
\begin{definition} An edge $e \in E_{l>1}$ is \emph{reducible} if the generator $s(e) \in \mathrm{Syz}_1(I)$ can be written as an $R$-linear combination:
\begin{gather}\label{eq:reduction}
    s(e) = \sum_{e'\in E_1} r(e')s(e')\,,
\end{gather}
for some $r\colon E_1\rightarrow R$. Otherwise, we call $e$ \emph{irreducible}.
\end{definition}
From this definition, we immediately obtain a characterization of linearly presented ideals in terms of reducible edges:
\begin{theorem}\label{thm:reducible} An ideal $I$ is linearly presented iff every edge $e\in E_{l>1}$ is reducible.
\end{theorem}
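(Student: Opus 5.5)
The plan is to reduce the statement to the claim that the Schreyer generators $\{S_{ij}\}$ from \eqref{eq:basis} generate $\mathrm{Syz}_1(I)$, and then to compare degrees in the graded module. First I will record the setup. $F_0=\bigoplus_i R(-d)\epsilon_i$, and $S_{ij}$ is homogeneous of degree $\deg\mathrm{lcm}(f_i,f_j)=d+l$ exactly when $(v_i,v_j)\in E_l$. By Schreyer's theorem, as quoted above, the set $\{S_{ij}\}=\{s(e)\}_{e\in\bigcup_l E_l}$ generates (indeed is a Gr\"obner basis of) $\mathrm{Syz}_1(I)=\ker\phi_0$. Being linearly presented means $\phi_1$ has degree one, i.e. $F_1=R(-d-1)^{\beta}$. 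Equivalently, $\mathrm{Syz}_1(I)$ is generated in degree $d+1$, which in turn is equivalent to $\mathrm{Syz}_1(I)$ being generated by its degree-$(d+1)$ component.

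Next I identify the degree-$(d+1)$ component. A syzygy of degree $d+1$ is $\sum_i c_i\epsilon_i$ with each $c_i$ a linear form and $\sum_i c_i f_i=0$. Since everything is $\mathbb{Z}^n$-graded (multigraded), the syzygy module splits into multidegree components. For a multidegree $x^\alpha$ of total degree $d+1$, the generators $f_i$ dividing $x^\alpha$ are $x^\alpha/x_k$ for various $k$. The syzygies in that component form the kernel of the sum map on the span of the corresponding basis elements, and this kernel is spanned by the differences $S_{ij}$ with $\mathrm{lcm}(f_i,f_j)=x^\alpha$, i.e. by the $s(e)$ with $e\in E_1$. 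Hence $\mathrm{Syz}_1(I)_{d+1}$ is the $k$-span of $\{s(e):e\in E_1\}$. Because all $S_{ij}$ have degree at least $d+1$, the submodule generated in degree $d+1$ is exactly $N:=\langle s(e):e\in E_1\rangle_R$.

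Now both directions follow. ($\Rightarrow$) If $I$ is linearly presented, then $\mathrm{Syz}_1(I)=N$, so every $s(e)$ with $e\in E_{l>1}$ lies in $N$. That means it is an $R$-combination as in \eqref{eq:reduction}, so $e$ is reducible. ($\Leftarrow$) If every $e\in E_{l>1}$ is reducible, then every Schreyer generator lies in $N$; the $E_1$ ones lie there trivially. Since these generators generate $\mathrm{Syz}_1(I)$, we get $\mathrm{Syz}_1(I)=N$. This module is generated in degree $d+1$, so the minimal free resolution has $F_1=R(-d-1)^{\beta_{1,1}}$, and $\phi_1$ is linear.

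The main obstacle is the middle step: justifying that minimal generator degrees are detected by any generating set. I will handle it via graded Nakayama. The minimal number of generators of degree $j$ equals $\dim_k(\mathrm{Syz}_1/\mathfrak m\,\mathrm{Syz}_1)_j$, and the elements of any homogeneous generating set surject onto this quotient. This ties "$F_1$ concentrated in degree $d+1$" to "$\mathrm{Syz}_1=N$". The multigraded description of the degree-$(d+1)$ component is what shows that $N$ is exactly the part generated by linear syzygies.
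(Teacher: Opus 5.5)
Your argument is correct and follows the same route as the paper. Schreyer's generators $s(e)$, $e\in\bigcup_l E_l$, generate $\mathrm{Syz}_1(I)$, and linear presentation is equivalent to $\mathrm{Syz}_1(I)$ being generated by the linear syzygies $s(E_1)$, so both directions are immediate. Your multigraded computation $\mathrm{Syz}_1(I)_{d+1}=\mathrm{span}_k\{s(e):e\in E_1\}$ and the graded Nakayama step supply justifications that the paper's brief proof takes for granted: it simply asserts that the linear generators ``correspond to the edges in $E_1$'' and that generation by $s(E_1)$ gives a linear presentation.
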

\begin{proof} Let $I$ be linearly presented, then $\mathrm{Syz}_1(I)$ is generated by linear forms, corresponding to the edges in $E_1$. Since the generators associated to all edges $\cup_{l}E_l$ form a Gr\"obner basis of $I$, we see that the generators associated to the edges $\bigcup_{l>1}E_l$ must be expressible in terms of generators $s(E_1)$ via \eqref{eq:reduction}, thus making them reducible. Conversely, if every edge is reducible, then the generators associated to them may be expressed in terms of generators associated to the edges in $E_1$, thereby making $s(E_1)$ a linear basis of $I$.
\end{proof}

A major consequence of the Theorem \ref{thm:reducible} is that we may parallelize the test of whether the ideal is linearly presented over the edges in $\bigcup_{l>1}E_l$. This is possible due to the fact that reducibility of each edge is independent from another. Additionally, the number of irreducible edges measures the extent to which a given ideal fails to be linearly presented, which we use to define heuristic \eqref{eq:heuristic}.

\begin{figure}[t!]
    \centering
    \includegraphics[width=1.0\linewidth]{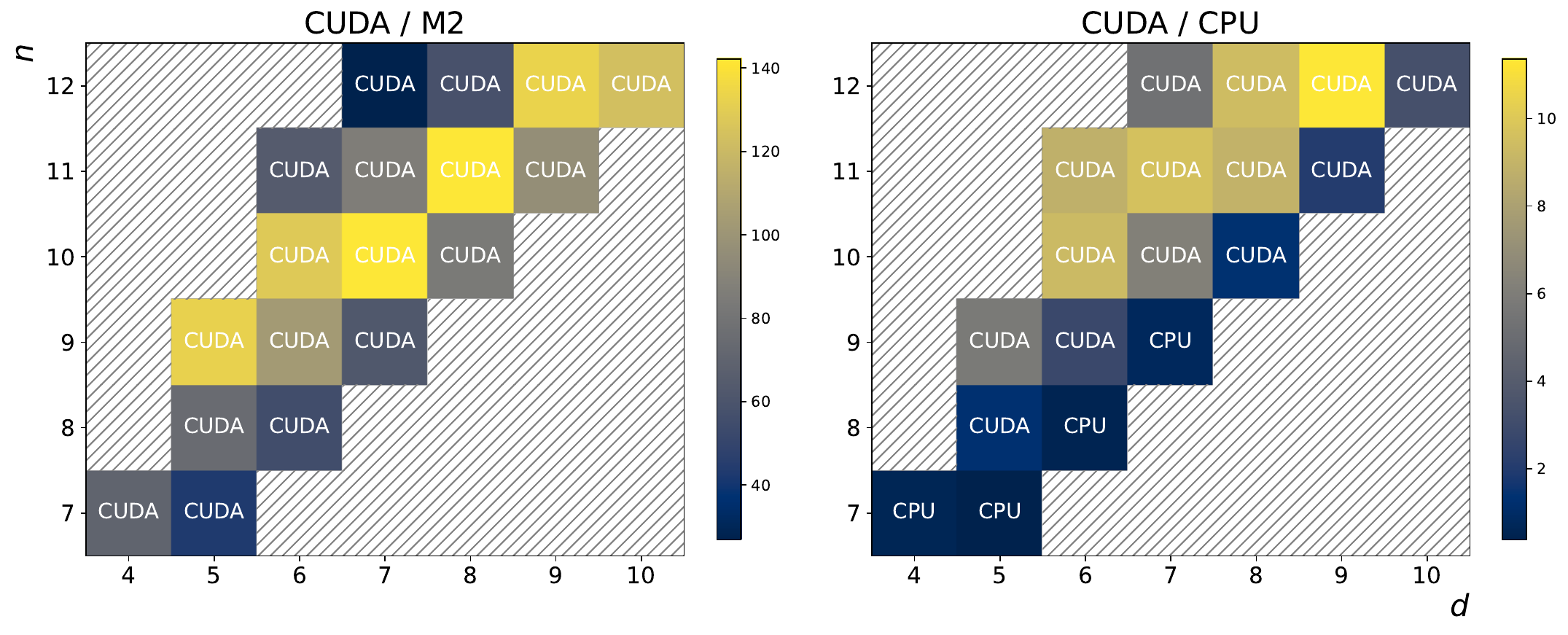}
    \caption{End-to-end timing benchmarks of linear presentation tests across different platforms.}
    \label{fig:cudaBench}
\end{figure}
\subsubsection{Algorithm for Testing Reducibility of Edges}
Noting that the ideals we consider are square-free, we may encode each generator as a binary sequence of length $n$, where bit at location $i$ is set to $1$ if the generator contains variable $x_i$. We denote this encoding by $b\colon \{f_i\}_i\rightarrow \mathbb{N}$. This is equivalent to the construction described in \eqref{eq:binaryMap}. Using this representation, the operations $\mathrm{lcm}$ and $\mathrm{gcd}$ over the generators $f_j$ of $I$ reduce to bitwise operations over the binary sequences given by:
\begin{gather}
    \mathrm{lcm}(f,g) = b(f)~\vert~b(g) \\\notag
    \gcd(f,g) = b(f)~\&~b(g)
\end{gather}
for $f,g$ generators of $I$. Note that each cycle in the graph $G$, consisting of edges $E' \subset \bigcup_l E_l$ corresponds to a relation of form:
\begin{gather}\label{eq:relationGeneral}
    \sum_{e\in E'} r(e)s(e') = 0\,,
\end{gather}
for some $r\colon \bigcup_lE_l\rightarrow R$ nonzero no $E'$ (we take $r(e) = 0$ for $e\notin E'$). Since for any given edge $e\in E_{l>1}$, we are interested in finding a relation of form \eqref{eq:reduction}, we design an algorithm which verifies if there exists a cycle consisting of edges $E_1 \cup\{e\}$ such that the corresponding relation \eqref{eq:relationGeneral} has coefficient $r(e) = \pm 1$ for the edge $e$. Without loss of generality, we focus on reducibility of edges in $E_2$, noting that the extension to higher levels $E_{l>2}$ is routine. The algorithm is shown in Alg.~\ref{alg:reduction}.

\begin{theorem} An edge $e\in E_{l>1}$ is reducible iff the Edge Reduction Algorithm~\ref{alg:reduction} outputs $\mathsf{True}$.
\end{theorem}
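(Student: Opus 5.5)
The plan is to reduce reducibility of an edge to a connectivity statement in a subgraph of $G_1$, and then check that Algorithm~\ref{alg:reduction} is a breadth-first search deciding exactly that connectivity. Fix $e=(v,v')\in E_l$ with $l>1$. Write $f=\theta(v)$, $f'=\theta(v')$ and $L=\mathrm{lcm}(f,f')$. Identify monomials with their bit masks via $b$.

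\textbf{Step 1 (multigrading).} Give $F_0$ the $\mathbb{Z}^n$-grading with $\deg\epsilon_i=\deg f_i$, so every $S_{ij}$ is homogeneous of multidegree $\mathrm{lcm}(f_i,f_j)$. Suppose $s(e)=\sum_{e'\in E_1} r(e')s(e')$. Taking the multidegree-$L$ component of both sides, we may assume each $r(e')$ is a scalar multiple of a monomial. A term $r(e')s(e')$ with $e'=(w,w')$ can then be nonzero in degree $L$ only if $\mathrm{lcm}(\theta(w),\theta(w'))\mid L$, in which case it equals $c\,\frac{L}{\mathrm{lcm}}S_{ww'}$. Let $G_1^L$ be the subgraph of $G_1$ induced on the vertices $w$ with $\theta(w)\mid L$.

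\textbf{Step 2 (reducible $\iff$ $v$ and $v'$ lie in the same component of $G_1^L$).} For $w,w'\mid L$ put $T_{ww'}:=\frac{L}{\mathrm{lcm}}S_{ww'}=\frac{L}{\theta(w')}\epsilon_{w'}-\frac{L}{\theta(w)}\epsilon_w$.
\begin{itemize}
\item If $v=w_0,w_1,\dots,w_k=v'$ is a path in $G_1^L$, the sum $\sum_t T_{w_tw_{t+1}}$ telescopes to $S_{vv'}$. So $e$ is reducible.
\item Conversely, the degree-$L$ part of $F_0$ has basis $\{\frac{L}{\theta(w)}\epsilon_w : \theta(w)\mid L\}$. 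Consider the linear functional sending each such basis vector to $1$ if $w$ lies in the component $C$ of $v'$, and to $0$ otherwise. It kills every $T_{ww'}$ with $(w,w')\in E_1$, since both endpoints lie in the same component. It sends $S_{vv'}$ to $1$ whenever $v\notin C$. Hence reducibility forces $v$ and $v'$ to be connected in $G_1^L$.
\end{itemize}

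\textbf{Step 3 (the algorithm explores exactly $G_1^L$ from $v$).} I will prove the invariant that every queued pair $(w,m)$ has $m=L\,\&\,\neg w$, i.e.\ $m$ records the variables of $L$ missing from $w$, and $w\mid L$.
\begin{itemize}
\item \emph{Initialization.} $m_{vv'}=v'\&\neg v=L\&\neg v$.
\item \emph{Test.} For a neighbor $w'$ of $w$, the test $m\,|\,m_{vw'}=m$ says $w'\&\neg v\subseteq L\&\neg v$. Since $v\subseteq L$, this holds iff $w'\subseteq L$, i.e.\ $\theta(w')\mid L$. So the test admits exactly the vertices of $G_1^L$.
\item \emph{Update.} We get $r=L\&\neg w\&\neg(w'\&\neg w)=L\&\neg w\&\neg w'$. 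Since $w\subseteq L$, $r\,|\,m_{w'w}=r\,|\,(w\&\neg w')=L\&\neg w'$. So the invariant is preserved.
\end{itemize}
Thus the procedure is a BFS in $G_1^L$ from $v$. It returns $\mathsf{True}$ exactly when it reaches $v'$, and $v'\mid L$ trivially.

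\textbf{Main obstacle: the visited-set bookkeeping.} Nodes are added to $\mathcal{V}$ only after they are dequeued, so a node may be enqueued several times. By the invariant, though, each copy carries the same mask, so duplicates are harmless. I also need that $\mathcal{V}$ never blocks a vertex that is still needed. Every vertex of the component is enqueued before its processing finishes, so standard BFS completeness applies and the search terminates on a finite graph. Combining Steps 2 and 3 proves the theorem. The same argument covers every level $l>1$, since nothing above depends on $l$.
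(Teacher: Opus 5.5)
Your Steps 1 and 2 are correct, and they take a cleaner route than the paper. The paper simply asserts that reducibility yields a single cycle through $e$ in $E_1\cup\{e\}$ with $r(e)=1$, and then propagates divisibility of coefficients along that cycle. Your projection to multidegree $L$, together with the component-indicator functional, actually proves that reducibility is equivalent to $v,v'$ being connected in $G_1^L$.

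The gap is in the \emph{Test} bullet of Step 3. By your own invariant, the mask attached to the dequeued vertex $w$ is $m=L\,\&\,\neg w$, not $L\,\&\,\neg v$; you substituted the initial mask. With the correct $m$, the printed test $m\,|\,m_{vw'}=m$ says $w'\,\&\,\neg v\subseteq L\,\&\,\neg w$. That is equivalent to $w'\subseteq L$ \emph{together with} $w\,\&\,w'\subseteq v$. For $w\neq v$ the second condition is a genuine restriction, so the test does not admit all of $G_1^L$ and completeness fails.

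Concretely, take $I=(x_1x_2,\,x_2x_3,\,x_3x_4)$ and $e=(x_1x_2,\,x_3x_4)\in E_2$. This edge is reducible, since $x_4\,S_{(x_1x_2,x_2x_3)}+x_1\,S_{(x_2x_3,x_3x_4)}=S_{(x_1x_2,x_3x_4)}$; equivalently, $x_1x_2,x_2x_3,x_3x_4$ is a path in $G_1^L$. Now run the algorithm:
\begin{itemize}
\item it enqueues $(x_1x_2,\,x_3x_4)$, then $(x_2x_3,\,x_1x_4)$;
\item from there the only unvisited neighbour is $x_3x_4$;
\item the test asks whether $m_{vw'}=x_3x_4$ is contained in $x_1x_4$, which fails;
\item so the output is $\mathsf{False}$.
\end{itemize}
Thus the statement is false for the algorithm exactly as printed, and your proof cannot be right as written.

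The paper's own proof checks the divisibility ``in the $m\,|\,m_{ww'}=m$ step'', so $m_{vw'}$ in the algorithm is a typo for $m_{ww'}$. You should say explicitly that you prove the theorem for that test. With it, your invariant argument goes through unchanged: $w'\,\&\,\neg w\subseteq L\,\&\,\neg w$ holds iff $w'\subseteq L$, because $w'\,\&\,w\subseteq w\subseteq L$. Your update computation and your BFS completeness and termination remarks are then fine.
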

\begin{proof} The algorithm~\ref{alg:reduction} is a BFS search of a cycle consisting of edges $E_{1}\cup\{e\}$ containing $e$ which produces a relation \eqref{eq:relationGeneral} such that $r(e) = 1$. Suppose that $e$ is reducible, then there exists a cycle $(v_0, \dots, v_{k-1})$ consisting of edges $e_i = (v_i, v_{i+1~\mathrm{mod}~k}) \in E_1\cup \{e\}$. W.l.o.g. let $e_0 = e$, and let $r\colon \cup_l E_l\rightarrow R$ be the coefficient function with $r(e_0) = 1$. Note that apart from $e_0$, the only other edge containing $v_1$ is $e_1$, therefore, in order to cancel out the $v_1$ component of the generator $s(e_0)$, we must have $m_{2,1}$ divide $m_{0,1}$, where for brevity we've used $m_{i,j} := m(\theta(v_i), \theta(v_j))$. We verify this in the $m~\vert~m_{ww'} == m$ step of the algorithm. The ratio $r(e_1) = \nicefrac{m_{0,1}}{m_{2,1}}$ is then stored in variable $r$ of the algorithm. Given $r(e_i)$, constructing $r(e_{i+1})$ is done similarly by checking if $m_{i+2, i+1}$ divides $r(e_i)m_{i,i+1}$ and storing the ratio in variable $r$, where $i+1$ and $i+2$ are mod $k$. Ultimately, the cycle reaches $e_0$, thus successfully terminating the algorithm to $\mathsf{true}$.

Conversely, suppose that the algorithm terminates to $\mathsf{true}$ for an edge $e\in E_2$. This can only happen if there exists a cycle connecting $(v_0,v_1,\dots, v_{k-1})$ the endpoints of an edge $e = (v_0,v_1)$ such that $m_{2,1}$ divides $m_{0,1}$, with ratio being $r(e_1)$. Construct the next coefficients recursively using:
\begin{gather}
    r(e_{i+1}) = \frac{r(e_i) m_{i,i+1}}{m_{i+2,i+1}}\,,
\end{gather}
all of which exist and are well-defined by construction. It is easy to see that $r(e_i)s(e_i)$ forms a telescoping series, leaving:
\begin{gather}
    \sum_{i=0}^{k-1}r(e_i)s(e_i) = -m_{1,0}\epsilon_{v_0} + \dots + r(e_{k-1})m_{k-1,0}\epsilon_{v_0}\,.
\end{gather}
Since by construction $r(e_{k-1}) m_{k-1,0}$ divides $m_{1,0}$ and $\deg{m_{1,0}} = 2$, noting that $\deg{r(e_{k-1})} > 0$ (propagated from $\deg{r(e_0)} = 1$) and $\deg{m_{k-1,0}} = 1$, we must have $r(e_{k-1})m_{k-1,0} = m_{1,0}$, thus $r(e_k) = 1$. This makes $r$ a well-defined coefficient function, which implies that $e$ is indeed reducible.
\end{proof}
By encoding the graph structure of each level $G_j$ in CSR format and parallizing the algorithm~\ref{alg:reduction} over the edges $\bigcup_{l>1}{E_l}$, we are able to utilize the full advantage provided by the CUDA-accelerated hardware. The timing benchmarks are provided in Fig.~\ref{fig:cudaBench}. Furthermore, by counting and retaining the list of irreducible edges, we obtain a useful set of additional features which help guide the agent towards non-Hirsch ideals.

\subsection{Supporting Spines}

\begin{theorem}\label{thm:spineToLin} Given a supporting spine $s_0$, there exists a trajectory $\tau = (s_0,\dots, s_T)$ to a non-Hirsch ideal $s_T$ such that every intermediate state $s_t$ has diameter $\mathrm{diam}(s_t)>d$.
\end{theorem}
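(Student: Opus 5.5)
The plan is to build the trajectory explicitly from the definition of a supporting spine. By definition there is a non-Hirsch ideal $J$ for which $s_0$ is a diameter-realizing path. Concretely, $G_{s_0}$ is a path graph $v_0,\dots,v_D$ with $D=\mathrm{diam}(s_0)>d$. Every vertex of $G_{s_0}$ is a generator of $J$, and $\mathrm{dist}_{G_J}(v_0,v_D)=\mathrm{diam}(J)\ge D>d$. Let $g_1,\dots,g_m$ be the generators of $J$ not already in $s_0$.

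The trajectory adds these generators one at a time, $s_t=s_0\cup\{g_1,\dots,g_t\}$. Each step is a single toggle action, and $s_m=J$ is non-Hirsch, so $T=m$. The only choice is the order of the $g_i$.

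The diameter bound uses the following observation. For any intermediate $s_t$ with $G_{s_t}$ connected, $G_{s_t}$ is an induced subgraph of $G_J$, because adjacency depends only on the two labels. Every path in $G_{s_t}$ is therefore a path in $G_J$, so
\begin{gather*}
\mathrm{dist}_{G_{s_t}}(v_0,v_D)\ \ge\ \mathrm{dist}_{G_J}(v_0,v_D)\ >\ d,
\end{gather*}
which gives $\mathrm{diam}(s_t)>d$.

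The main obstacle is the convention for $\mathrm{diam}$ on a disconnected graph. If it is taken to be $\infty$, then disconnected intermediates trivially satisfy $\mathrm{diam}(s_t)>d$, and any order works. If instead we want every $s_t$ connected, which matches the finiteness requirement in $\mathcal{C}_S$, we order the $g_i$ by BFS distance in $G_J$ from the connected set $V(G_{s_0})$. Since $G_J$ is connected, each $g_i$ then has a neighbor in $G_J$ among $V(G_{s_0})\cup\{g_1,\dots,g_{i-1}\}$. That neighbor is also adjacent to $g_i$ in $G_{s_i}$, again because adjacency depends only on the labels. So connectivity is preserved inductively, and the displayed inequality applies at every step.

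I would also remark that linearity is required only of $s_T=J$ and not of the intermediate states. No syzygy argument, such as Theorem~\ref{thm:reducible}, is needed; the whole content is the monotonicity of distances under passing to induced subgraphs.
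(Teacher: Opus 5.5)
Your proposal is correct and takes the same approach as the paper. Both add the missing generators of the target non-Hirsch ideal one at a time and observe that any path between the spine's endpoints in an intermediate graph is also a path in $G_J$, so that distance never drops to $d$ or below. Your BFS ordering, which keeps every intermediate graph connected, is a useful extra detail that the paper's proof leaves implicit.
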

\begin{proof}
This is because we may construct a trajectory from $s_0$ to $s_T$ by sequentially adding vertices from $s_T$ which are not in $s_t$. The fact that $s_0$ is a diameter-realizing path ensures that no new vertex will introduce a strictly shorter path between the endpoints of $s_0$. This can be seen by contradiction: suppose that $s_{t}$ does not satisfy the diameter constraint for some $t<T$. This implies that there exists a path in $s_t$ between the endpoints of $s_0$ which is shorter than $d+1$. However, this path would also be a subgraph of $s_T$, which contradicts the assumption that $s_0$ is diameter-realizing in $s_T$.
\end{proof}

\newpage
\section{Non-Hirsch Ideals Constructed Using HRL}\label{a:examples}
Here we provide visualizations of some of the degree $d=7$ non-Hirsch ideals in polynomial ring $k[a,b,c,d,e,f,g,h,i,j]$.
\begin{figure}[H]
    \centering
    \includegraphics[width=0.9\linewidth]{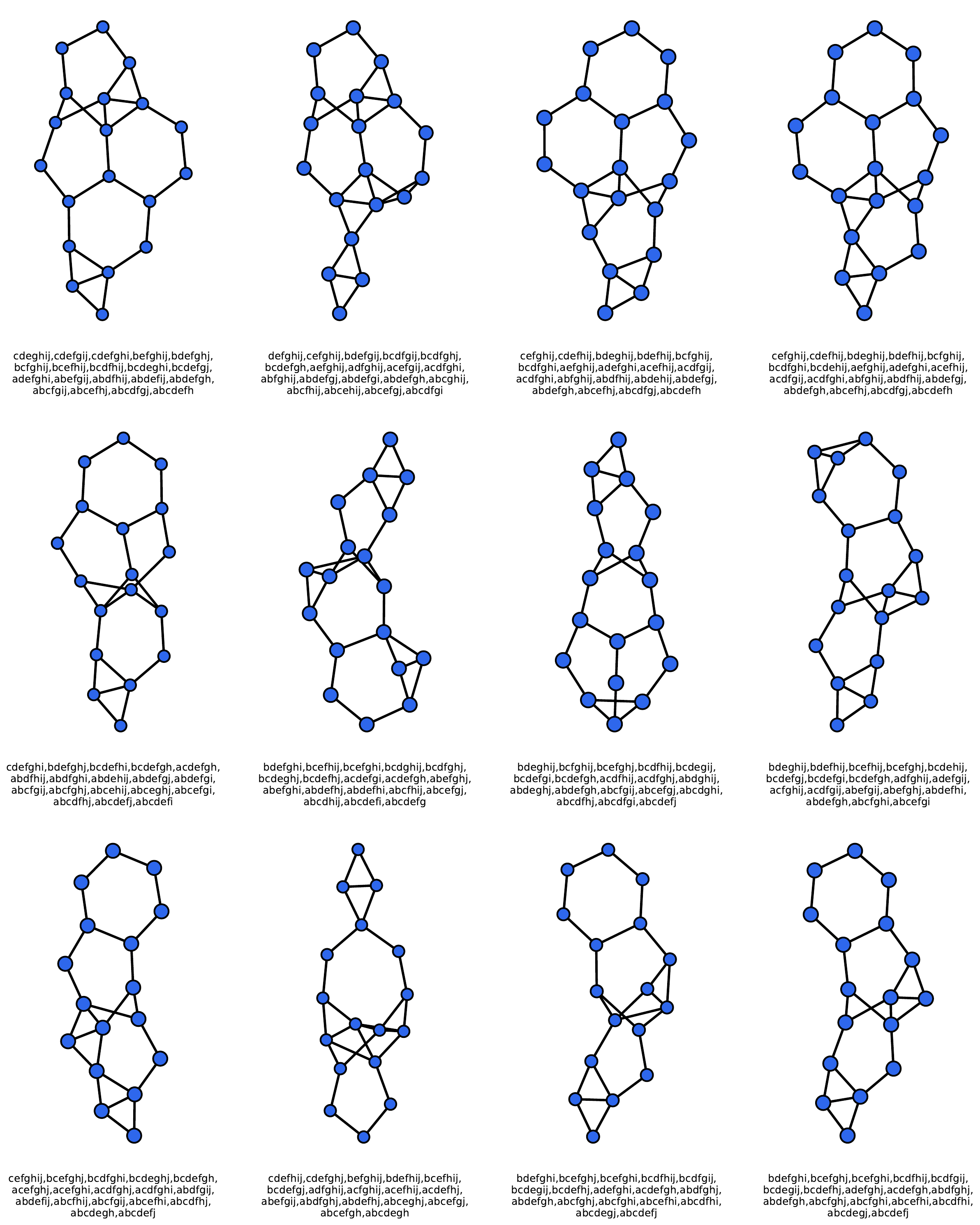}
    \caption{Some of the $d=7$ non-Hirsch ideals generated using HRL approach described in Sec.~\ref{sec:cco}.}
    \label{fig:solutionsD7}
\end{figure}

\begin{figure}[H]
    \centering
    \includegraphics[width=0.9\linewidth]{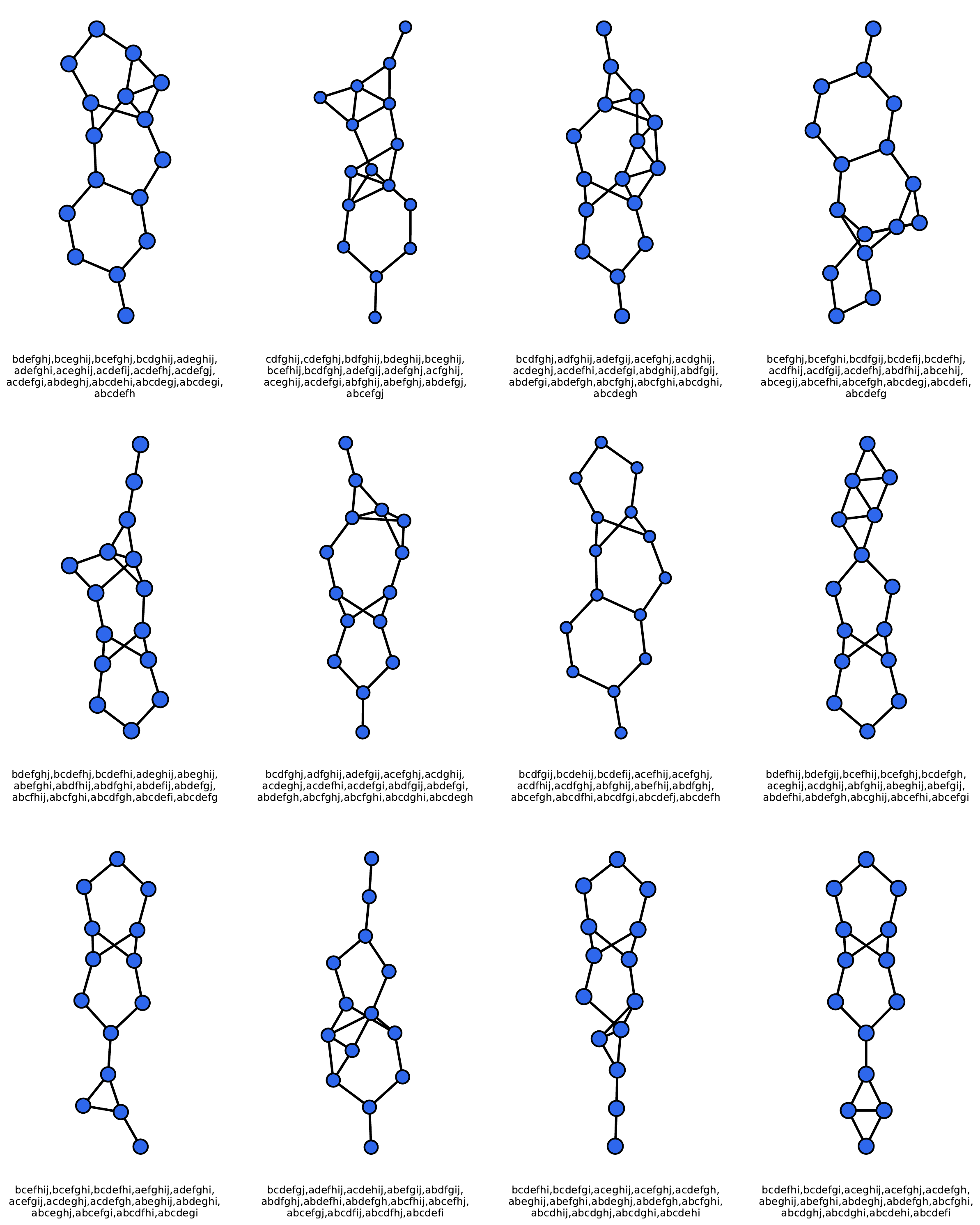}
    \caption{Some of the $d=7$ non-Hirsch ideals generated using HRL approach described in Sec.~\ref{sec:cco}.}
    \label{fig:solutionsD7_2}
\end{figure}

\newpage
In order to verify that these ideals are indeed linearly presented, use the following Macaulay2 script:
\begin{lstlisting}[language=Macaulay2, label={lst:m2}]
R = ZZ/101[a,b,c,d,e,f,g,h,i,j];
isLinear = I -> (
    d := (degree I_*_0)_0;
    {d+1} == max degrees source syz gens I
);

I = ideal({insert generators here});

isLinear I
\end{lstlisting}

\section{Bottleneck States Identified Via Soft Constraints}\label{a:nonspines}
A standard approach is to employ PPO-Lagrangian as a ``soft-constraint'' alternative to ``hard-constraint'' in ~\eqref{eq:constraintS} with no additional hyperparameters. The constraint dynamically penalizes transitions that fail to efficiently produce spines. Specifically, we define a constraint cost function $c_S(s_t) = \mathrm{diam}(s_{t-1}) - \mathrm{diam}(s_{t})$ if $s_{t-1}$ is not a terminal state, otherwise $0$. Interestingly, PPO-Lagrangian enables the agent to construct a much more general class of bottleneck states, such as trees and tadpole-shaped graphs (See Figure~\ref{fig:nonspineAppendixC}). However, we observe that while PPO-Lagrangian learns to satisfy the constraints, the rate at which it produces solutions is significantly lower than that achieved with hard constraints, as expected from prior results \cite{ray2019benchmarking}.
\begin{figure}[H]
    \centering
    \includegraphics[width=1.0\linewidth]{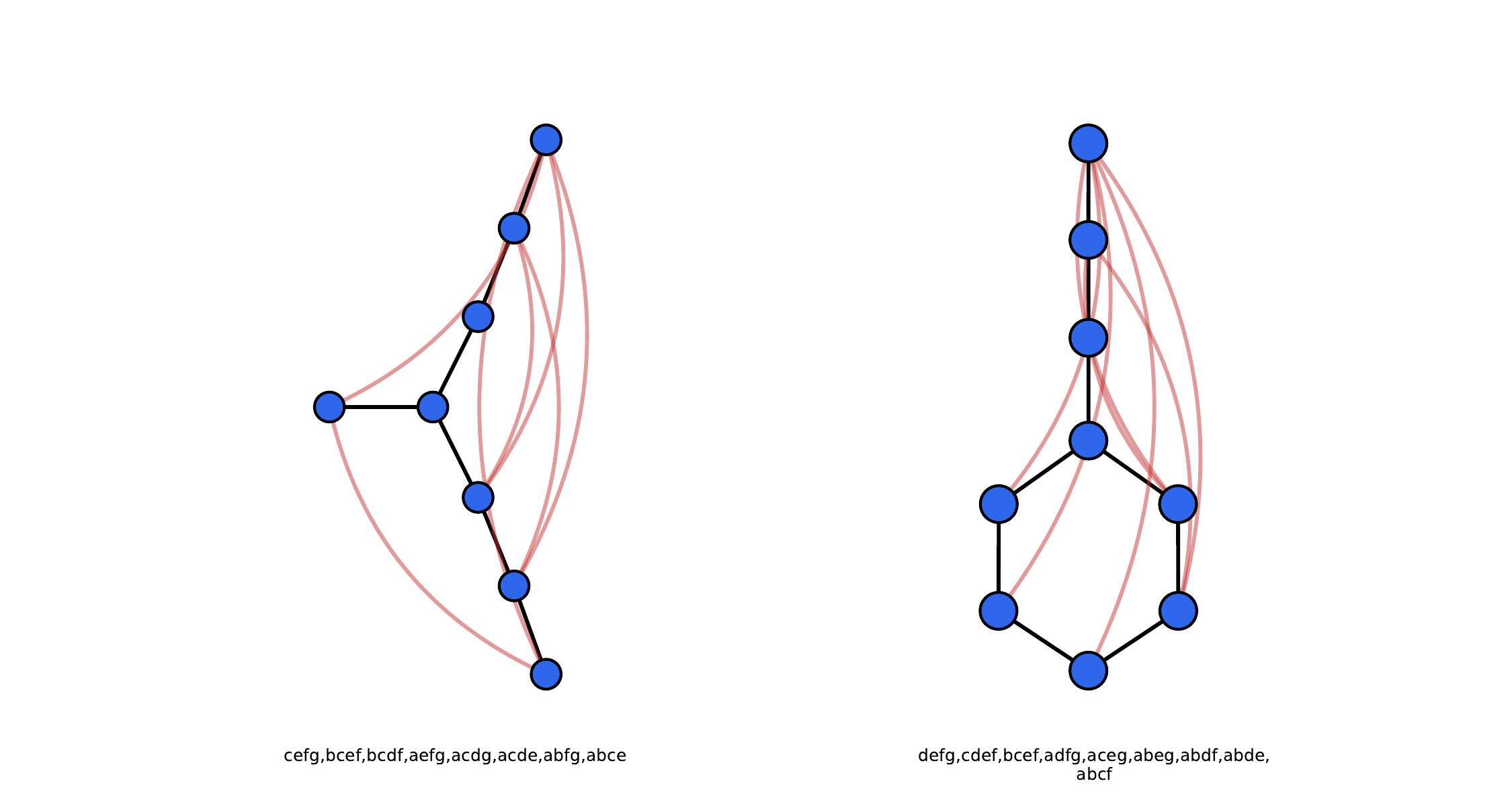}
    \caption{Non-spine bottleneck states identified using soft constraints.}
    \label{fig:nonspineAppendixC}
\end{figure}

\newpage
\section{Training and Model Details}
\subsection{Syzygy-aware Message Passing}\label{a:sconv}
\begin{algorithm}[H]
    \caption{SConv: Preprocessing}\label{alg:sconvPrep}
    \begin{algorithmic}[1]
        \REQUIRE Binary mask $\mathbf{m}$ for nodes of $G_I$ in $G_{\mathsf{max}}(d,n)$.
        \REQUIRE Node and mask encodings $\phi_{\texttt{ne}}$ and $\phi_{\texttt{me}}$, respectively. Embedding dimension $d_e$.
        \STATE Encode masks:
        \[
            \mathbf{m} \leftarrow \phi_{\texttt{me}}(\mathbf{m})\quad \in \mathbb{R}^{|\mathbf{m}|\times d_e}
        \]
        \STATE Expand mask features by copying it over $n$ variables:
        \[
            \mathbf{m} \leftarrow \mathrm{Expand}(\mathbf{m}, n)\quad \in \mathbb{R}^{|\mathbf{m}|\times n\times d_e}
        \]
        \STATE Encode node and mask features:
        \[
            \tilde{\mathbf{x}} = \phi_{\texttt{ne}}(\mathbf{x}) + \phi_{\texttt{me}}(\mathbf{m})\quad \in \mathbb{R}^{|\mathbf{m}|\times n\times d_e}
        \]
        {\bf return} $\tilde{\mathbf{x}} \in \mathbb{R}^{|\mathbf{m}|\times n\times d_e}$
    \end{algorithmic}
\end{algorithm}

\begin{algorithm}
    \caption{SConv: Message Passing}
    \begin{algorithmic}[1]
        \REQUIRE Base graph $G_{\mathsf{max}}(d,n)$. Binary mask $\mathbf{m}$ for nodes of $G_I$ in $G_{\mathsf{max}}(d,n)$. Irreducible edges $E_{\mathrm{irr}}$.
        \REQUIRE Embedded features $\tilde{\mathbf{x}} \in \mathbb{R}^{|\mathbf{m}|\times n \times d_e}$ from \ref{alg:sconvPrep}
        \REQUIRE Embedding dimension $d_f$. Number of heads $H$.
        \REQUIRE Weights $W_Q, W_K, W_V \in \mathbb{R}^{d_e \times H\times d_f}$.
        \REQUIRE $k$-layer MLP: $f_{\mathrm{MLP}}\colon ~\mathbb{R}^{n\times H \times d_f} \rightarrow \mathbb{R}^{d_f}$.

        \FOR{each edge $e=(i,j) \in E_1(G_{\mathsf{max}}(d,n))\cup E_{\mathrm{irr}}$}
            \STATE Compute key and value:\[
                \mathbf{K}_j = \tilde{\mathbf{x}}_jW_K,\quad \mathbf{V}_j = \tilde{\mathbf{x}}_jW_V\quad\in \mathbb{R}^{n\times H\times d_f}
            \]
            \STATE Compute query:\[
                \mathbf{Q}_i = \tilde{\mathbf{x}}_i W_Q\quad\in\mathbb{R}^{n\times H\times d_f}
            \]
            \STATE Compute cross-attention:\[
                \mathbf{m}_{ij} = \mathsf{Attention}(Q_i, K_j, V_j)\quad\in \mathbb{R}^{n \times H\times d_f}
            \]
            \STATE Apply MLP:\[
                \mathbf{m}_{ij} \leftarrow f_{\mathrm{MLP}}(\mathbf{m}_{ij})\quad \in \mathbb{R}^{d_{f}}
            \]
        \ENDFOR
        \STATE Aggregate the features per node:\[
            \mathbf{y}_i = \mathrm{MEAN}\{\mathbf{m} _{ij}~\vert~ (i,j)\in E_1(G_{\mathsf{max}}(d,n))\cup E_{\mathrm{irr}} \} \quad\in\mathbb{R}^{d_f}
        \]

        {\bf return} $\mathbf{y} \in \mathbb{R}^{|\mathbf{m}|\times d_f}$.
    \end{algorithmic}
\end{algorithm}

\subsection{Hyperparameters}
\begin{table}[H]
    \centering
    \begin{tabular}{lcc}
         \toprule
         \textbf{Hyperparameter} & \textbf{Symbol} & \textbf{Value} \\
         \midrule
         Number of heads & $H$ & $4$ \\
         Node and mask embedding dimension & $d_e$ & $64$ \\
         $\mathsf{SConv}$ embedding dimension & $d_f$ & $64$\\
         \bottomrule
    \end{tabular}
    \vspace{0.5em}
    \caption{Model Hyperparameters.}
    \label{tab:sconvHyperparams}
\end{table}

\begin{table*}[t!]
	\centering
	\begin{tabular}{lcccc}
		\toprule[.5mm]
		\multirow{2}{*}{\bf Hyperparameter} & \multicolumn{4}{c}{\bf Value}\\
		\cmidrule(lr){2-5}
        & $d=4$ & $d=5$ & $d=6$ & $d=7$ \\
		\hline
        \# parallel environments & 16 & 16 & {\bf 32} & {\bf 48}\\
		  \# steps per epoch & 128 & 128 & 128 & 128 \\
        \# improvement epochs & 4 & 4 & 4 & 4\\
        Learning rate & $2.5\times 10^{-4}$ & $2.5\times 10^{-4}$ & $2.5\times 10^{-4}$ & $\mathbf{2.5\times 10^{-5}}$\\
        Mini-batch size & 64 & 64 & 64 & 64 \\
        Max steps for $\pi_S$ & 10 & 11 & 12 & 13 \\
        Max steps for $\pi_L$ & 10 & 10 & {\bf 15} & {\bf 15}\\
		\bottomrule[.5mm]
	\end{tabular}
    \vspace{0.5em}
    \caption{Training Hyperparameters.}
    \label{tab:trainHyperparams}
\end{table*}

\subsection{Intrinsic Curiosity Module \cite{pathak2017curiosity}}
Standard RL exploration strategies often fail when extrinsic rewards are extremely sparse or delayed. 
In these cases, intrinsic reward may offer a solution to the exploration bottleneck.
The intrinsic curiosity module of \cite{pathak2017curiosity} is comprised of three networks: a feature embedding states to features $\phi: s \mapsto \phi(s)$, an inverse model mapping features of consecutive states to a predicted action between them $\rho:(\phi(s), \phi(s'))\mapsto \hat{a}$ and a forward model mapping an action-feature pair to the predicted features of the next state $\sigma: (a, \phi(s))\mapsto \hat{\phi}(s')$.
The prediction error of the forward model is used as the basis for the intrinsic reward signal
\begin{equation}
    r^i(s,s') = \eta\left|\left|\hat{\phi}(s)-\phi(s')\right|\right|_2^2
\end{equation}
where $\eta$ is a scaling factor. The mean-return training curve for an HRL agent, trained using ICM module, is shown in the Figure~\ref{fig:icm}.
\begin{figure}[H]
    \centering
    \includegraphics[width=1.0\linewidth]{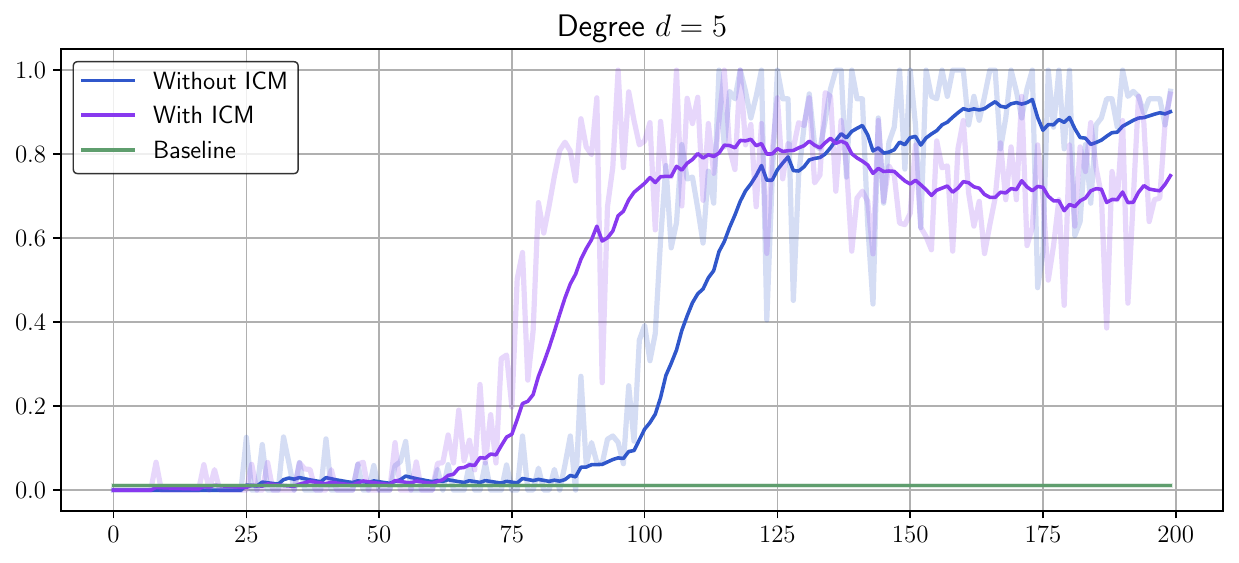}
    \caption{Average episodic return of HRL agent with and without ICM module.}
    \label{fig:icm}
\end{figure}

\section{New Non-Hirsch Ideals From Old}\label{a:newFromOld}
In this section we describe another direction we have explored for constructing new non-Hirsch ideals by using the list of existing ones. This complements other approaches described in the main text that mainly focus on constructing non-Hirsch ideals of a fixed degree $d$ in a polynomial ring $k[X_n]$ with a fixed number of variables $X_n:=\{x_1, \dots, x_n\}$ without prior knowledge of other non-Hirsch ideals.

Let $I$ be a non-Hirsch ideal of degree $d$ in $k[X_n]$ generated by monomials $\{f_i\}_{i=1}^g$. We may embed this ideal into a polynomial ring $k[X_n][x_{n+1}] = k[X_{n+1}]$ by multiplying each generator of $I$ by $x_{n+1}$:
\begin{gather}
    f_i \longmapsto x_{n+1}f_i\,.
\end{gather}
This produces an ideal  $\tilde{I} = x_{n+1}I$ of degree $d+1$ in $k[X_{n+1}]$ which preserves both the diameter and the number of irreducible edges. Since $I$ is non-Hirsch and therefore linearly presented, we must have $\tilde{I}$ also linear. What remains to do is to attach another generator to the ideal $\tilde{I}$ in order to increase the diameter to $d+2$ whilst ensuring that no new irreducible edges are introduced by this operation. While the number of possible new generators that may be added to the ideal $\tilde{I}$ grows combinatorially, we find that a simple brute-force search is sufficient.

\begin{figure*}[t!]
	\centering
	\resizebox{\textwidth}{!}{
	\begin{tikzpicture}[every node/.style={align=center}, arrow/.style={-{Latex[length=3mm]}, thick}]

		\node (F1) at (2,5) {\includegraphics[width=2.4cm, trim={0cm 0cm 0cm 0cm}]{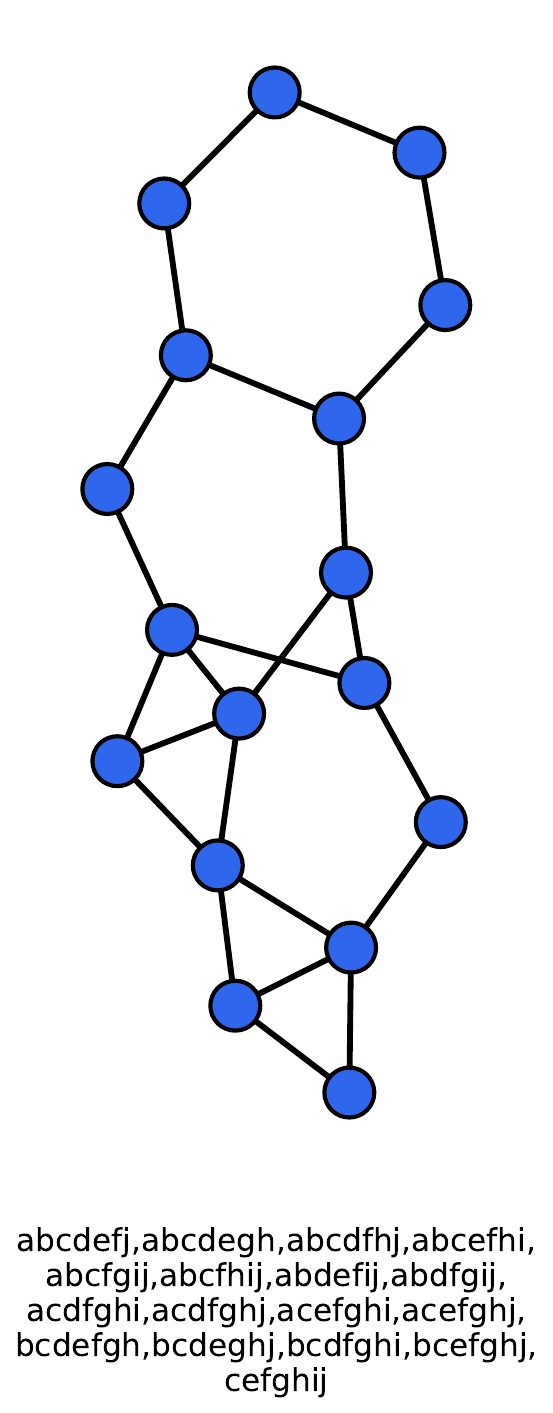}};
		\node (F2) at (7,5) {\includegraphics[width=2.7cm, trim={0cm 0cm 0cm 0cm}]{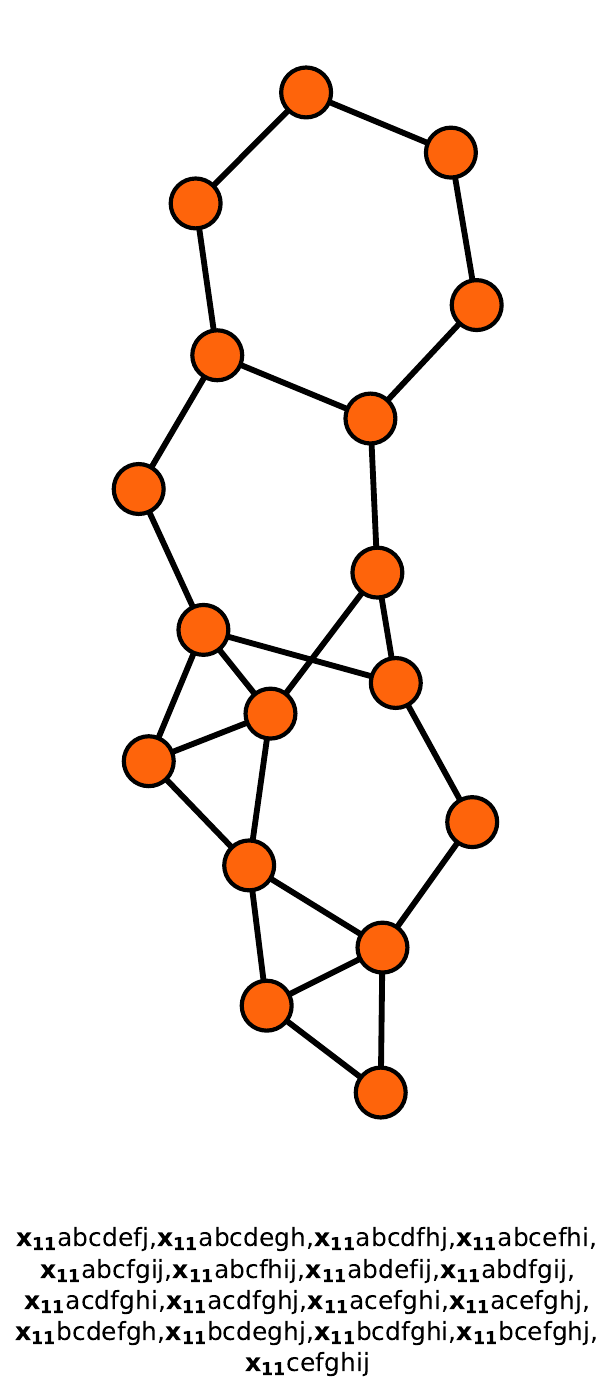}};
		\node (F3) at (12,5) {\includegraphics[width=2.7cm, trim={0cm 0cm 0cm 0cm}]{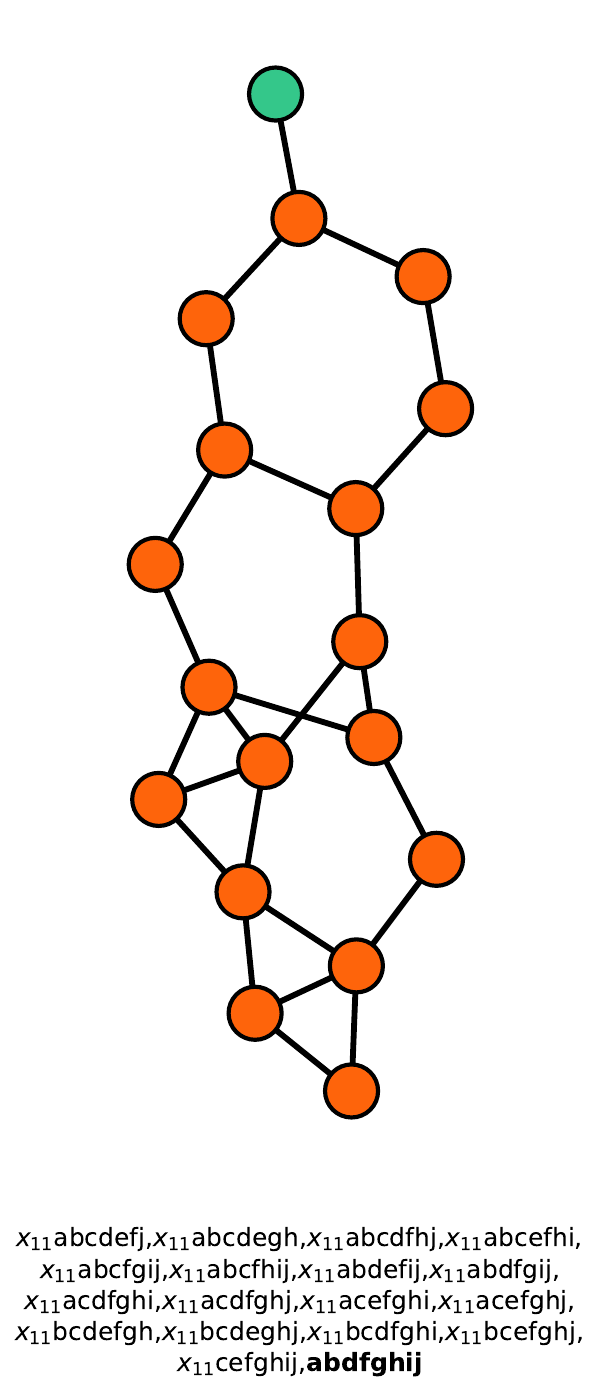}};

		\draw[arrow] (F1.east) -- (F2.west);
        \draw[arrow] (F2.east) -- (F3.west);

        \node[above=2mm, font=\small\sffamily] at ($(F1.east)!0.5!(F2.west)$) {$I\mapsto x_{11}I$};

        \node[above=2mm, font=\small\sffamily] at ($(F2.east)!0.5!(F3.west)$) {Brute-force};

	\end{tikzpicture}}
	\caption{Process of constructing new Non-Hirsch ideals from old.}
    \vspace{-1.2em}
\end{figure*}

\end{document}